\definecolor{Gray}{gray}{0.9}
\definecolor{midgreen}{rgb}{0.1,0.5,0.1}
\definecolor{darkgray}{gray}{0.25}
\definecolor{lightblue}{rgb}{0.25,0.25,0.8}
\definecolor{mydarkblue}{rgb}{0,0.08,0.45}
\newcommand{\norm}[1]{\ensuremath{\left\| #1 \right\|}}
\newcommand{\abs}[1]{\left |#1\right|}
\def\0{{\bm 0}}
\def\k{{\bm k}}
\def\q{{\bm q}}
\def\r{{\bm r}}
\def\v{{\bm v}}
\def\x{{\bm x}}
\def\y{{\bm y}}
\def\I{{\bm I}}
\def\J{{\bm J}}
\def\K{{\bm K}}
\def\M{{\bm M}}
\def\S{{\bm S}}
\def\V{{\bm V}}
\def\X{{\bm X}}
\def\bpsi{\boldsymbol{\psi}}
\def\bphi{\boldsymbol{\phi}}
\def\E{\mathop{{\mathbb{E}}}}
\def\R{{\mathbf{R}}}
\def\Ncal{\mathcal{N}}
\def\RR{\mathbb{R}}
\newtheorem{theorem}{Theorem}
\newtheorem{lemma}{Lemma}
\newtheorem{fact}{Fact}
\newtheorem{definition}{Definition}
\def\llama{$\mathtt{Llama}$-$\mathtt{3.1}$-$\mathtt{8B}$-$\mathtt{Instruct}$}
\def\mistral{$\mathtt{Ministral}$-$\mathtt{8B}$-$\mathtt{Instruct}$}
\def\whK{\widehat{\makebox*{$m$}{$ \K $}}}
\def\whV{\widehat{\makebox*{$m$}{$ \V $}}}
\def\bPsi{{\boldsymbol \Psi}}
\def\bpsi{{\boldsymbol \psi}}
\def\bfpn{b_{\text{FPN}}}
\newcommand{\email}[1]{\href{mailto:#1}{\color{black} \texttt{#1}}}
\title{PolarQuant: Quantizing KV Caches with Polar Transformation}
\author{
  Insu Han \\
  KAIST \\
  \email{insu.han@kaist.ac.kr} \\
  \and
  Praneeth Kacham \\
  Google Research\\
  \email{pkacham@google.com} \\
  \and 
  Amin Karbasi \\
  Yale University\\
  \email{amin.karbasi@yale.edu} \\
  \and 
  Vahab Mirrokni \\
  Google Research\\
  \email{mirrokni@google.com} \\
  \and 
  Amir Zandieh \\
  Google Research\\
  \email{zandieh@google.com} \\
}
\date{}
\begin{document}

\maketitle
\def\thefootnote{*}\footnotetext{
Authors are listed alphabetically.}

\begin{abstract}
Large language models (LLMs) require significant memory to store Key-Value (KV) embeddings in their KV cache, especially when handling long-range contexts. 
Quantization of these KV embeddings is a common technique to reduce memory consumption. 
This work introduces PolarQuant, a novel quantization method employing random preconditioning and polar transformation. 
Our method transforms the KV embeddings into polar coordinates using an efficient recursive algorithm and then quantizes resulting angles. 
Our key insight is that, after random preconditioning, the angles in the polar representation exhibit a tightly bounded and highly concentrated distribution with an analytically computable form. 
This nice distribution eliminates the need for explicit normalization, a step required by traditional quantization methods which introduces significant memory overhead because quantization parameters (e.g., zero point and scale) must be stored in full precision per each data block. 
PolarQuant bypasses this normalization step, enabling substantial memory savings. 
The long-context evaluation demonstrates that PolarQuant compresses the KV cache by over $\mathbf{\times 4.2}$ while achieving the best quality scores compared to the state-of-the-art methods.
\end{abstract}

\section{Introduction}
\label{intro}

Transformer-based models form the backbone of modern artificial intelligence systems and have been instrumental in driving the ongoing AI revolution.
Their applications span various domains, including frontier language models (LLM)~\cite{achiam2023gpt,claude, Gemini} to text-to-image~\cite{ramesh2022hierarchical,firefly,midjourney}, text-to-video synthesis~\cite{Veo, sora}, coding assistants~\cite{copilot} and even multimodal models that ingest text, audio, image, and video data~\cite{gpt4o, Gemini}. 
The self-attention mechanism~\cite{vaswani2017attention} is at the heart of these models as it enables capturing the direct dependencies of all tokens in the input sequence.
The ability of these models grows along with their size and context length~\cite{kaplan2020scaling}, which leads to computational challenges in terms of huge memory consumption to support fast inference.

Most large language models, as well as multimodal and video models, adopt an autoregressive, decoder-only architecture that generates tokens sequentially. 
To avoid redundant attention score computations during the generation phase, these models employ a KV caching scheme, which stores the key and value embeddings of previously generated tokens in each attention layer. 
However, a significant challenge in deploying autoregressive Transformers lies in the substantial memory demands, as the KV cache size scales with both the model size (i.e., the number of layers and attention heads) and the context length. 
Furthermore, serving each model session typically necessitates its own dedicated KV cache, further compounding memory demands.
This has become a significant bottleneck in terms of memory usage and computational speed, particularly for models with long context lengths. 
Thus, reducing the KV cache size while preserving accuracy is critical to addressing these limitations.

Several approaches have been proposed to address the KV caching challenge. 
Architectural solutions, such as multi-query attention \cite{shazeer2019fast}, grouped-query attention \cite{ainslie2023gqa}, and multi-head latent attention~\cite{dai2024deepseekmoe}, modify the transformer architecture to reduce the memory demands during inference by decreasing the \emph{number} of key-value pairs that are to be stored.

Another orthogonal line of research focuses on reducing the KV cache size by pruning or evicting redundant or unimportant tokens~\cite{beltagy2020longformer, zhang2024h2o, liu2024scissorhands, xiao2023efficient, zandieh2024subgen, li2024snapkv}. However, eviction strategies face limitations in long-context tasks that require precise knowledge extraction, such as needle-in-haystack scenarios.
Additionally, some recent works tackle the issue from a systems perspective, such as offloading~\cite{sheng2023flexgen, sun2024shadowkv} or integrating virtual memory and paging strategies into the attention mechanism~\cite{kwon2023efficient}.

A simple yet effective approach to reducing KV cache size is quantizing the floating-point numbers (FPN) in the KV cache by storing their approximations using fewer number of bits. 
Several quantization methods have been proposed specifically for the KV cache~\cite{yue2024wkvquant, yang2024no, dong2024qaq, kang2024gear, zhang2024kv, liu2024kivi, hooper2024kvquant}.
Recently, a new KV cache quantization method called QJL~\cite{zandieh2024qjl} introduced an efficient, data-oblivious 1-bit quantization approach based on sketching techniques.
This method does not require tuning or adaptation to the input data, incurs significantly lower memory overhead compared to prior works, and achieves superior performance.
A very recent work, Lexico~\cite{kim2024lexico}, applies techniques from sparse representation learning to compress the KV cache by learning a universal dictionary such that all key and value embeddings are represented as extremely sparse vectors within the learned dictionary. 
Unfortunately, this approach requires solving a computationally expensive matching pursuit algorithm for each key and value embedding, making Lexico relatively slow.

Traditional KV cache quantization methods face significant ``memory overhead'' due to the need for data normalization before quantization. 
Most methods group data into blocks--either channel-wise or token-wise--and independently normalize each block which requires computing and storing quantization constants (e.g., zero points and scales) in full precision. 
This process can add over 1 additional bit per quantized number, resulting in considerable memory overhead.
We show that applying a random preconditioning matrix on the embedding vectors eliminates the need for data normalization. 
This approach aligns with the recent use of random Hadamard matrices as preconditioners before quantizing embedding vectors in attention layers to improve quality~\cite{shah2024flashattention, ashkboos2024quarot}.

\subsection{Contributions}
We propose quantizing KV vectors in polar coordinates instead of the usual Cartesian coordinates. This shift enables more efficient representation and compression of KV embeddings.

{\bf Random Preconditioning.} 
We apply a random rotation to the vectors before quantization, which preserves inner products while randomizing the distribution of each vector. 
This preconditioning causes the angles in polar coordinates to concentrate, allowing us to quantize them with high precision using small bit-widths.
We derive the analytical distribution of angles after preconditioning and leverage this insight to construct an optimized quantization codebook, minimizing quantization error.

{\bf Recursive Polar Transformation.} 
We introduce a computationally efficient recursive polar transformation that converts vectors into polar coordinates, enabling practical deployment of our approach.
We are able to prove an error bound in \cref{main_thrm} showing our algorithm is asymptotically optimal for worst-case KV embedding vectors.

{\bf Performance on Long-Context Tasks.} We evaluate PolarQuant on long-context tasks and demonstrate that it achieves the best quality scores compared to competing methods while compressing the KV cache memory by over $\mathbf{\times 4.2}$.

\section{Preliminaries} \label{sec:prelim}
We use boldface lowercase letters, such as $\x$ and $\y$, to denote vectors, and boldface uppercase letters, like $\M$, to denote matrices.
To denote a slice of a vector $\x$ between the coordinate indices $i$ and $j$ inclusive of the endpoints, we use the notation $\x_{i:j}$. For a matrix $\M$, we write $\M_{i,:}$ to denote its $i$-th row vector, which we will simply refer to as $\M_i$.

\subsection{Efficient Token Generation and KV Caching}
Autoregressive Transformers often utilize cache storage for faster token generation. 
Given an input prompt, models encode the prompt information into two types of embeddings, called Key and Value. 
To generate subsequence tokens efficiently, the Key-Value (KV) embeddings are cached to avoid recomputing them. 

The Key-Value (KV) caching method leverages the architecture of transformer decodcers, where a causal mask in applied in the attention mechanism. 
Once the keys and values are computed for a given token, they remain unchanged for subsequent token generation. 
By caching these key-value pairs, the model avoids redundant computations, as it only needs to compute the query for the current token and reuse the cached keys and values for attention.

This approach significantly reduces computation time during token generation. 
Instead of processing the entire sequence repeatedly, the KV cache enables the model to efficiently focus on the incremental computation of new tokens. 
This makes the method particularly useful in real-time applications, such as conversational AI and text generation, where fast and resource-efficient inference is critical.

\subsection{Random Preconditioning}
A critical step in the PolarQuant algorithm is random preconditioning of the KV vectors prior to quantization.
This involves applying a random projection matrix to the embedding vectors before quantizing them.
To analyze the algorithm effectively, we rely on specific facts and properties of multivariate normal random variables, which are outlined below.

\begin{figure*}[t!]
    \centering
    \includegraphics[width=0.9\textwidth]{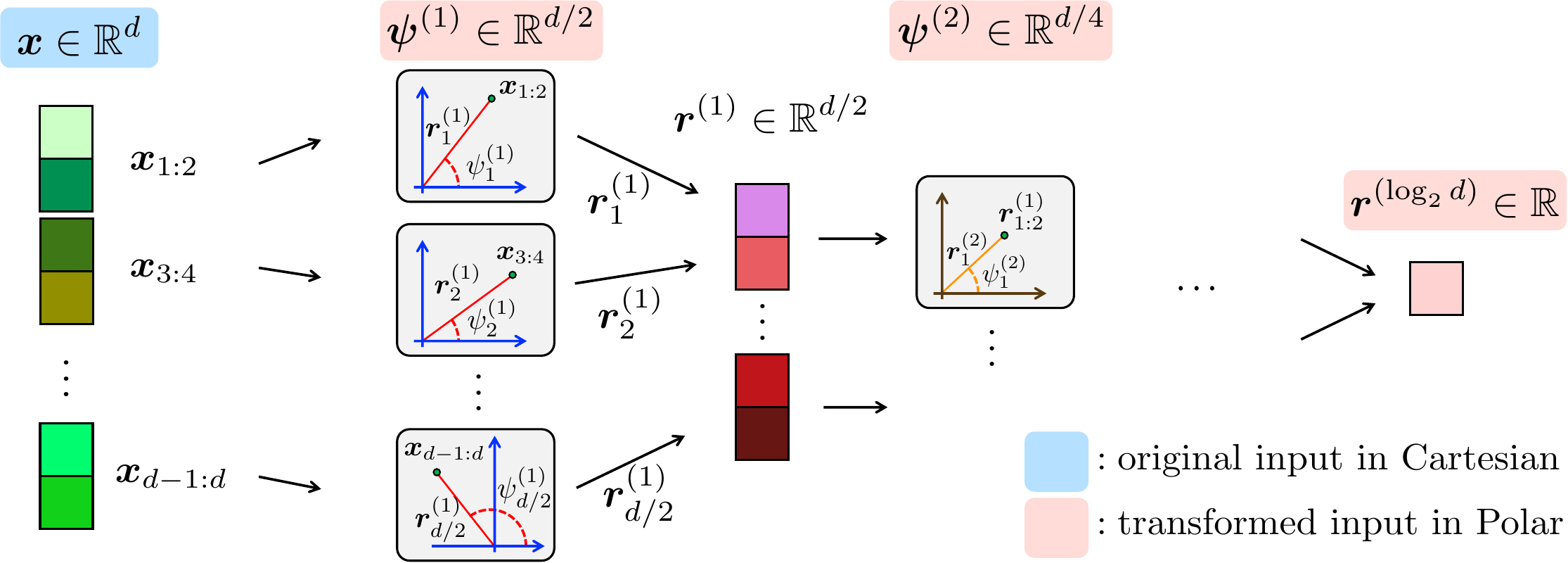}
    \caption{Overview of recursive polar transformation procedure in \cref{def_cartesian_to_polar}}\label{fig_polar_transform_diagram}
\end{figure*}

\begin{fact}\label{lem_norm_gaussian}
    For any positive integer $d$, if $\x \in \RR^d$ is a zero mean unit variance isotropic Gaussian random variable in dimension $d$, i.e., $\x \sim \Ncal \left( 0, \I_d \right)$, then its $2$-norm, denoted by $r := \| \x \|_2$, follows a generalized gamma distribution with the following probability density for any $r \ge 0$:
    \[
    f_R(r) = \frac{2}{2^{d/2} \cdot \Gamma(d/2)} r^{d-1} \exp\left( -r^2/2 \right)
    \]
\end{fact}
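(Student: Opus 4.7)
The plan is to compute the cumulative distribution function $P(R \le r)$ directly by integrating the joint Gaussian density over the ball of radius $r$, then differentiate in $r$ to obtain the density. Since $\x \sim \Ncal(\0, \I_d)$, its joint density factorizes as $(2\pi)^{-d/2}\exp(-\|\x\|_2^2/2)$, which depends on $\x$ only through its norm. This rotational symmetry is exactly what makes the spherical-coordinates change of variables effective.

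First I would write
\[
P(R \le r) = (2\pi)^{-d/2}\int_{\|\x\|_2 \le r}\exp\!\left(-\|\x\|_2^2/2\right)\diff \x,
\]
and then switch to spherical coordinates, using $\diff \x = s^{d-1}\diff s\, \diff\Omega$, where $\diff\Omega$ is the uniform surface measure on the unit sphere $S^{d-1}$. Since the integrand only depends on $s = \|\x\|_2$, the angular part integrates out to the surface area $|S^{d-1}| = 2\pi^{d/2}/\Gamma(d/2)$, a standard identity which can itself be derived by evaluating $\int_{\RR^d}\exp(-\|\x\|_2^2/2)\diff\x$ in both Cartesian and spherical coordinates and matching. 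This yields
\[
P(R \le r) = \frac{2\pi^{d/2}}{(2\pi)^{d/2}\Gamma(d/2)}\int_0^r s^{d-1}\exp(-s^2/2)\diff s = \frac{2}{2^{d/2}\Gamma(d/2)}\int_0^r s^{d-1}\exp(-s^2/2)\diff s.
\]

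Finally I would differentiate the CDF in $r$ by the fundamental theorem of calculus to obtain the claimed density
\[
f_R(r) = \frac{2}{2^{d/2}\Gamma(d/2)}\, r^{d-1}\exp(-r^2/2), \qquad r \ge 0.
\]
There is no genuine obstacle here; the only step that requires more than routine manipulation is invoking the surface area of $S^{d-1}$, and a self-contained alternative is to observe that $R^2 = \sum_{i=1}^d x_i^2$ is a sum of $d$ independent squared standard normals, hence $\chi^2_d$-distributed with known density $g(u) = u^{d/2-1}e^{-u/2}/(2^{d/2}\Gamma(d/2))$, and then apply the change of variables $u = r^2$ (so $\diff u = 2r\,\diff r$) to recover the same expression for $f_R$.
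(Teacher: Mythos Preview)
Your proposal is correct. Your alternative at the end---recognizing $R^2 \sim \chi_d^2$ and changing variables $u = r^2$---is exactly the paper's argument: the paper writes $F_R(r) = \Pr[\|\x\|_2^2 \le r^2] = \gamma(d/2, r^2/2)/\Gamma(d/2)$ via the chi-squared CDF and differentiates. Your primary route differs only in that it avoids naming the $\chi_d^2$ distribution and instead integrates the Gaussian density over the ball directly in spherical coordinates, invoking the closed form for $|S^{d-1}|$. This is marginally more self-contained (you do not need to quote the chi-squared CDF or density), at the cost of needing the sphere surface-area identity; the paper's route is shorter if one is willing to cite the chi-squared distribution as known. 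Both are standard and equally rigorous.
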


The proof of Fact~\ref{lem_norm_gaussian} is provided in \cref{sec:proof_lem_norm_gaussian}.
We also use the following facts about the moments of the univariate normal distribution.
\begin{fact}[Moments of Normal Random Variable]\label{moment_normal}
    If $x$ is a normal random variable with zero mean and unit variance $x \sim \Ncal(0, 1)$, then for any integer $\ell$, $\E_{x \sim \Ncal(0,1)} \left[ |x|^\ell \right] = 2^{\ell/2} \Gamma((\ell+1)/2) / \sqrt{\pi}$.
\end{fact}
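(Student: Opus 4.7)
The plan is to compute the integral defining $\E[|x|^\ell]$ directly and reduce it to a Gamma function via a standard substitution. The statement is a classical result, and there is no conceptual obstacle; the only care required is in tracking powers of two and $\sqrt{\pi}$ cleanly.

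First, I would write out the expectation explicitly as
\[
\E_{x \sim \Ncal(0,1)}[|x|^\ell] = \int_{-\infty}^{\infty} |x|^\ell \, \frac{1}{\sqrt{2\pi}} e^{-x^2/2}\, dx.
\]
Since both $|x|^\ell$ and $e^{-x^2/2}$ are even, I would fold the integral over $[0,\infty)$ to obtain
\[
\E[|x|^\ell] = \frac{2}{\sqrt{2\pi}} \int_0^{\infty} x^\ell e^{-x^2/2}\, dx.
\]

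Next, I would make the substitution $u = x^2/2$, so that $x = (2u)^{1/2}$ and $dx = 2^{-1/2} u^{-1/2}\, du$. Substituting gives
\[
\int_0^{\infty} x^\ell e^{-x^2/2}\, dx = \int_0^{\infty} (2u)^{\ell/2} e^{-u} \cdot 2^{-1/2} u^{-1/2}\, du = 2^{(\ell-1)/2} \int_0^\infty u^{(\ell-1)/2} e^{-u}\, du.
\]
By the definition of the Gamma function, the remaining integral equals $\Gamma((\ell+1)/2)$.

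Finally, I would combine constants: multiplying by the prefactor $2/\sqrt{2\pi}$ yields
\[
\E[|x|^\ell] = \frac{2}{\sqrt{2\pi}} \cdot 2^{(\ell-1)/2} \Gamma((\ell+1)/2) = \frac{2^{(\ell+1)/2}}{\sqrt{2\pi}} \Gamma((\ell+1)/2) = \frac{2^{\ell/2}}{\sqrt{\pi}} \Gamma\!\left(\frac{\ell+1}{2}\right),
\]
which matches the claimed formula. The only real ``difficulty'' is bookkeeping the exponent arithmetic $2^{(\ell+1)/2}/\sqrt{2} = 2^{\ell/2}$ and noting that the substitution is well-defined for all $\ell$ with $\ell > -1$ (which covers every non-negative integer $\ell$ intended by the statement).
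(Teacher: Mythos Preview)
Your derivation is correct and is the standard way to establish this identity: write out the expectation, use evenness to integrate over $[0,\infty)$, substitute $u = x^2/2$, and recognize the Gamma integral. The constant bookkeeping is clean and the final simplification $2^{(\ell+1)/2}/\sqrt{2\pi} = 2^{\ell/2}/\sqrt{\pi}$ is right.

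The paper itself does not prove this statement; it is recorded as a \emph{Fact} and invoked later (in the computation of the marginal density of the polar angle) without justification. So there is no alternative argument in the paper to compare against --- your proof simply supplies what the paper omits.
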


PolarQuant algorithm applies a random preconditioning prior to quantization. 
This preconditioning involves multiplying each embedding vector by a shared random sketch matrix $\S$ with i.i.d. normal entries. 
By the Johnson-Lindenstrauss (JL) lemma~\cite{dasgupta2003elementary}, this preconditioning preserves the norms and inner products of the embedding vectors with minimal distortion. 
A key property of this preconditioning, which we will leverage in our later analysis, is that the embedding vectors after preconditioning follow a multivariate normal distribution.
This is formalized in the following fact.

\begin{fact}\label{fact_dist_vector_post_random_sketching}
    For any vector $\x \in \RR^d$ if $\S \in \RR^{m \times d}$ is a random matrix with i.i.d. normal entries $\S_{i,j} \sim \Ncal(0, 1)$, then the vector $\S \cdot \x$ has multivariate normal distribution $\S \cdot \x \sim \Ncal\left( 0, \norm{\x}_2  \cdot \I_m \right)$.
\end{fact}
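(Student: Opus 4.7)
The plan is to verify the claim coordinate-by-coordinate and then argue independence across coordinates, using only basic closure properties of the Gaussian family. Writing $(\S\x)_i = \sum_{j=1}^d \S_{i,j}\, x_j$, I observe that each coordinate of $\S\x$ is a fixed linear combination of the i.i.d.\ entries $\S_{i,1},\dots,\S_{i,d} \sim \Ncal(0,1)$. Since any linear combination of independent normal random variables is itself normal, $(\S\x)_i$ is univariate normal.

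Next I would compute its mean and variance by linearity: $\E[(\S\x)_i] = \sum_j x_j \cdot \E[\S_{i,j}] = 0$, and $\var((\S\x)_i) = \sum_j x_j^2 \cdot \var(\S_{i,j}) = \norm{\x}_2^2$, so $(\S\x)_i \sim \Ncal(0, \norm{\x}_2^2)$.

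To get the full joint distribution, I would argue that the entries of $\S\x$ are mutually independent. This follows because for $i \neq k$, the coordinate $(\S\x)_i$ is a function only of the $i$-th row $\S_i$, while $(\S\x)_k$ is a function only of the $k$-th row $\S_k$, and the rows of $\S$ are independent by assumption. Alternatively, one can observe that the joint vector $\S\x \in \RR^m$ is a linear transformation of the jointly Gaussian vector $\mathrm{vec}(\S) \in \RR^{md}$, hence multivariate normal, and then compute the cross-covariance $\E[(\S\x)_i (\S\x)_k] = \sum_{j,\ell} x_j x_\ell \E[\S_{i,j} \S_{k,\ell}] = 0$ for $i \neq k$, which for Gaussian marginals implies independence. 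Combining the per-coordinate distribution with independence yields $\S \cdot \x \sim \Ncal(0, \norm{\x}_2^2 \cdot \I_m)$, matching the stated conclusion (modulo the apparent typo of $\norm{\x}_2$ in place of $\norm{\x}_2^2$ in the covariance).

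There is no real obstacle here; the statement is essentially a direct consequence of rotational invariance of the isotropic Gaussian, and the only ``choice'' is whether to present it via (i) the explicit row-by-row argument above, or (ii) invoking that $\S\x$ is a linear image of an i.i.d.\ Gaussian and identifying its covariance $\E[\S\x\x^\top\S^\top] = \norm{\x}_2^2 \I_m$. I would use the row-by-row argument since it makes the independence across coordinates of $\S\x$ immediately transparent, which is the property actually exploited in the subsequent analysis of the angle distribution in polar coordinates.
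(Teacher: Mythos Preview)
Your argument is correct and complete; you also correctly flag the typo ($\norm{\x}_2^2$ rather than $\norm{\x}_2$ in the covariance). The paper itself does not supply a proof of this fact at all---it is stated as a standard property of Gaussian random projections and used directly---so there is nothing to compare against, and your row-by-row derivation would be a perfectly adequate justification were one required.
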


The following lemma establishes the distribution of the polar angle of a point $(x, y)$ in dimension 2, where the $x$ and $y$ coordinates are independent samples from the Euclidean norm of multivariate normal random variables.
\begin{lemma}
    For any positive integer $d$, if $x, y \ge 0$ are two i.i.d. random variables with generalized gamma distribution with probability density function $f_Z(z) = \frac{2}{2^{d/2} \cdot \Gamma(d/2)} z^{d-1} \exp\left( -z^2/2 \right)$, then the angle variable $\theta := \tan^{-1}(y / x)$ follows the probability density function:
    \[
    f_{\Theta}(\theta) = \frac{\Gamma(d)}{2^{d - 2} \cdot \Gamma(d/2)^2} \cdot \sin^{d-1}(2\theta).
    \]
Additionally, $\E[\Theta] = \pi/4$ and $\mathrm{Var}(\Theta) = O(1/\sqrt{d})$.
\end{lemma}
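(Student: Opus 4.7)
The plan is a direct Cartesian-to-polar change of variables to derive the density, a symmetry argument for the mean, and a Beta-distribution representation to control the variance.

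First, I would compute the joint density $f_Z(x)\,f_Z(y)$ on the first quadrant and apply the substitution $x = r\cos\theta$, $y = r\sin\theta$ with Jacobian $r$. The product $(xy)^{d-1}$ collapses via $2\sin\theta\cos\theta = \sin(2\theta)$ into $r^{2d-2}\,2^{-(d-1)}\sin^{d-1}(2\theta)$, after which integrating out $r$ using the standard gamma identity $\int_0^\infty r^{2d-1} e^{-r^2/2}\,\diff r = 2^{d-1}\Gamma(d)$ (substitute $u = r^2/2$) and combining constants recovers the stated $f_\Theta(\theta)$. Since $\sin^{d-1}(2\theta)$ is invariant under $\theta \mapsto \pi/2 - \theta$, the distribution of $\Theta$ is symmetric about $\pi/4$, so $\E[\Theta] = \pi/4$ is immediate.

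The main obstacle is the variance estimate. My plan is to bypass a direct integration of $(\theta - \pi/4)^2$ against $\sin^{d-1}(2\theta)$, which would require delicate Stirling-type asymptotics on the normalizing constant $\Gamma(d)/\Gamma(d/2)^2$. Instead, I exploit Fact~\ref{lem_norm_gaussian}: the squares $X^2$ and $Y^2$ are independent $\chi^2_d$ variables, so the ratio $U := X^2/(X^2+Y^2)$ follows $\mathrm{Beta}(d/2, d/2)$ with the closed-form variance $\tfrac{1}{4(d+1)}$. Because $U = \cos^2\Theta$, the variable $V := 2U - 1 = \cos(2\Theta)$ has mean zero and variance $1/(d+1)$, and the identity $\arccos(v) = \pi/2 - \arcsin(v)$ yields $\Theta - \pi/4 = -\tfrac{1}{2}\arcsin(V)$, so $\mathrm{Var}(\Theta) = \tfrac{1}{4}\E[\arcsin(V)^2]$.

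To finish, I would bound $\E[\arcsin(V)^2]$ by splitting on $\{|V| \le 1/2\}$ and its complement. On the first event the Lipschitz-type bound $|\arcsin(v)| \le (\pi/3)|v|$ contributes $O(\E[V^2]) = O(1/d)$; on the second, the uniform bound $\arcsin(v)^2 \le (\pi/2)^2$ combined with Chebyshev's inequality $\Pr[|V| > 1/2] \le 4\,\mathrm{Var}(V) = O(1/d)$ also contributes $O(1/d)$. Combining these pieces yields $\mathrm{Var}(\Theta) = O(1/d) \subseteq O(1/\sqrt{d})$, completing the argument. The Beta-ratio trick is what makes this clean: it converts a nontrivial trigonometric moment integral into an elementary variance computation, sidestepping any Laplace-method asymptotics.
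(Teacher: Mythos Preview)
Your derivation of the density and the symmetry argument for $\E[\Theta]=\pi/4$ match the paper's proof exactly. The variance argument, however, takes a genuinely different route.

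The paper bounds $\mathrm{Var}(\Theta)$ by direct integration: it writes $\mathrm{Var}(\Theta)$ as a constant times $\int_0^{\pi/4}\theta^2\cos^{d-1}(2\theta)\,\diff\theta$, applies the pointwise inequalities $\sin x\ge 2x/\pi$ and $1-x\le e^{-x}$ to majorize the integrand by a Gaussian, extends the range to $[0,\infty)$, and evaluates the resulting gamma integral. The normalizing constant $\Gamma(d)/(2^{d-2}\Gamma(d/2)^2)$ is then handled via Stirling's approximation, yielding $\mathrm{Var}(\Theta)\le C''/(d-1)$. Your Beta-distribution trick sidesteps all of this: by recognizing $\cos^2\Theta\sim\mathrm{Beta}(d/2,d/2)$ you get the exact variance $\mathrm{Var}(\cos 2\Theta)=1/(d+1)$ for free, and the Lipschitz/Chebyshev split converts this into the same $O(1/d)$ bound on $\mathrm{Var}(\Theta)$ without touching Stirling or any asymptotic expansion. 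Your argument is more elementary and arguably cleaner; the paper's has the minor advantage of being self-contained (it does not invoke the gamma-ratio--to--Beta fact) and produces an explicit constant if one tracks it, but at the cost of the Stirling step you deliberately avoided. Both approaches in fact deliver $O(1/d)$, comfortably inside the stated $O(1/\sqrt{d})$.
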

See \cref{sec:lem_quant_error} for a proof.

\section{PolarQuant}

We now describe our approach of quantizing angles in polar coordinates and using it to the KV cache problem. In \cref{sec:polar}, we introduce how to recursively transform Cartesian vector to polar coordinates. In \cref{sec:angle_distribution}, we provide an analysis of polar angle distributions with preconditioning. In \cref{sec:polar-quant}, we explain details of quantization polar transformed embeddings and practical implementation. 

\subsection{Recursive Polar Transformation} \label{sec:polar}

There are various methods to derive the polar representation of $\RR^d$. 
Here we propose a polar transformation that can be recursively computed from the Cartesian coordinates of points in $\RR^d$. Throughout this work, we assume that $d$ is an integer power of $2$.

At a high level, our approach begins by grouping pairs of coordinates of a $d$-dimensional vector $\x$ and transforming each pair into 2D polar coordinates. This produces $d/2$ radius and angle pairs. 
Next, we gather $d/2$ of radii and apply the polar transform to them. This procedure is recursively repeated $\log_2d$ times and the final output consists of a single final radius and a collection of $1, 2, 4, \dots, d/2$-dimensional angle vectors. A formal definition is provided in \cref{def_cartesian_to_polar}. 

\begin{definition}[Cartesian to Polar Transformation]\label{def_cartesian_to_polar}
    For any integer power of two $d$, the polar representation of any vector $\x \in \RR^d$ includes $d-1$ angles and a radius. 
    Angles are organized into a collection of $\log_2d$ vector of angles $\psi^{(1)}, \psi^{(2)}, \ldots \psi^{(\log_2d)}$ such that $\psi^{(1)} \in [0, 2\pi)^{d/2}$ and $\psi^{(\ell)} \in [0, \pi/2]^{d/2^\ell}$ for any $\ell \ge 2$. 
    In other words, the angles are computed in $\log_2d$ levels and there are $d/2^\ell$ angles in level $l$.
    These angles are defined by the following relation for $\ell \in \{2, 3, \ldots \log_2d\}$:
    \begin{align*}
    \psi^{(1)}_{j} &:= \tan^{-1}\left( \x_{2j} / \x_{2j-1} \right) ~~\text{ for } j \in [d/2], \\
    \psi^{(\ell)}_{j} &:= \tan^{-1}\left( \frac{\norm{\x_{(j-1/2)2^\ell+1:j2^\ell}}_2 }{ \norm{\x_{(j-1)2^\ell+1:(j-1/2)2^\ell}}_2 } \right) ~~\text{ for } j \in[d/2^\ell].
    \end{align*}
    The reverse of this transformation maps the angles and the radius of any point to its Cartesian vector representation using the following equation:
    \begin{align*}
    \x_i &= \norm{\x}_2 \cdot \prod_{\ell=1}^{\log_2d} \left(\cos\psi^{(\ell)}_{\lfloor \frac{i}{2^\ell} \rfloor} \right)^{\bm{1}_{\{ (i~\text{mod}~2^\ell) \le 2^{\ell-1} \}}} \cdot \prod_{\ell=1}^{\log_2d} \left( \sin\psi^{(\ell)}_{\lfloor \frac{i}{2^\ell} \rfloor} \right)^{\bm{1}_{\{ (i~\text{mod}~2^\ell) > 2^{\ell-1} \}}}
    \end{align*}
\end{definition}


A visual diagram of the algorithm is shown in \cref{fig_polar_transform_diagram} and the pseudocode is provided in \cref{alg:polar-quant}
(see \textsc{Polar} procedure). In what follows, we analyze the distribution of angles generated in each quantization level.

\subsection{Distribution of Polar Angles Under Random Preconditioning} \label{sec:angle_distribution}

One of our primary objectives is to eliminate the need for explicit normalization (e.g., minimum/maximum values) of the KV cache data prior to quantization, thereby reducing quantization overhead. 
To achieve this, our algorithm applies random preconditioning to the embedding vectors.
This preconditioning involves multiplying each embedding vector by a shared random sketch matrix $\S$ with i.i.d. normal entries. 
By the Johnson-Lindenstrauss (JL) lemma~\cite{dasgupta2003elementary}, this preconditioning preserves the norms and inner products\footnote{For our implementation, we use random rotation matrices (square matrices $P$ satisfying $P^{\top} P = I$), which preserve the norms and inner products exactly while removing the independence across projected coordinates which we use for our theoretical results.} of the embedding vectors with minimal distortion. 
A key property of this preconditioning, which we will leverage in our later analysis, is that the embedding vectors after preconditioning follow a multivariate normal distribution.
This has been formalized in Fact~\ref{fact_dist_vector_post_random_sketching}.

During the preconditioning stage, the sketch is applied to all embedding vectors in the KV cache, allowing the analysis of PolarQuant to effectively treat the vectors being quantized as samples from a multivariate normal distribution. 
So for the analysis and design of PolarQuant we can assume without loss of generality that our goal is to quantize a random vector with multivariate Gaussian distribution.
A critical insight is that the distribution of angles after random preconditioning becomes predictable and can be analytically derived, which enables the design of optimal quantization schemes.

The polar distribution of a Gaussian vector is derived in the following lemma.

\begin{lemma}[Distribution of a Gaussian Vector Under Polar Transformation]\label{lem_polar_dist}
    For an integer power of two $d$, suppose that $\x \sim \Ncal(0, I_d)$ is a random zero mean isotropic Gaussian random variable in dimension $d$. 
    Let $\psi_{d}(\x) := \left( \psi^{(1)}, \psi^{(2)}, \ldots \psi^{(\log_2d)} \right)$ denote the set of polar angles obtained by applying the polar transformation defined in \cref{def_cartesian_to_polar} on $\x$. 
    Denote the radius of $\x$ by $r = \norm{\x}_2$.
    The joint probability density function for $\left( r, \psi^{(1)}, \psi^{(2)}, \ldots \psi^{(\log_2d)} \right)$ is the following:
    \begin{equation}
        f_{R, \Psi_d}(r, \psi_{d}(\x) ) = f_R(r) \cdot \prod_{\ell=1}^{\log_2d} f_{\Psi^{(\ell)}}\left( \psi^{(\ell)} \right), 
    \end{equation}
    where $f_R(r)$ is the p.d.f. defined in Fact~\ref{lem_norm_gaussian}, $f_{\Psi^{(1)}}$ is p.d.f. of the uniform distribution over $[0, 2\pi)^{d/2}$:
    \[
    f_{\Psi^{(1)}}: [0, 2\pi)^{d/2} \to (2\pi)^{-d/2},
    \]
    and for every $\ell\in\{ 2, 3, \ldots \log_2d \}$ the p.d.f. $f_{\Psi^{(\ell)}}$ is the following:
    \begin{align*}
        f_{\Psi^{(\ell)}}&: [0, \pi/2]^{d/2^\ell} \to \RR_+\\
        f_{\Psi^{(\ell)}}(\bpsi) &= \prod_{i=1}^{d/2^\ell} \frac{\Gamma(2^{\ell-1})}{2^{2^{\ell-1} - 2} \cdot \Gamma(2^{\ell-2})^2} \sin^{(2^{\ell-1}-1)}(2 \psi_i).
    \end{align*}
\end{lemma}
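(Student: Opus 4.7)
The plan is to proceed by induction on the dimension $d$ (restricted to powers of $2$), exploiting the block structure of \cref{def_cartesian_to_polar}. The base case $d = 2$ is the classical two-dimensional polar decomposition of $\Ncal(0, \I_2)$: the angle $\psi^{(1)}_1 = \tan^{-1}(x_2/x_1)$ is uniform on $[0, 2\pi)$, the radius $r = \norm{\x}_2$ has the chi density given by Fact~\ref{lem_norm_gaussian} at $d=2$, and the two are independent. Note that for $\ell = 1$ the claimed density $f_{\Psi^{(1)}} \equiv (2\pi)^{-d/2}$ reduces here to $1/(2\pi)$, matching the uniform.

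For the inductive step I would split $\x \sim \Ncal(0, \I_d)$ into independent halves $\x^{(L)} := \x_{1:d/2}$ and $\x^{(R)} := \x_{d/2+1:d}$, each distributed as $\Ncal(0, \I_{d/2})$. A direct unfolding of \cref{def_cartesian_to_polar} shows that for every level $\ell \le \log_2(d/2)$ the ambient vector $\psi^{(\ell)}$ is simply the concatenation of the level-$\ell$ angle vectors produced by applying the same transformation separately to $\x^{(L)}$ and to $\x^{(R)}$, while the new top-level angle and the overall radius are
\[
\psi^{(\log_2 d)}_1 = \tan^{-1}\!\left( \norm{\x^{(R)}}_2 \big/ \norm{\x^{(L)}}_2 \right), \qquad r = \sqrt{\norm{\x^{(L)}}_2^2 + \norm{\x^{(R)}}_2^2}.
\]
Applying the induction hypothesis to each half yields the joint density of $\bigl(r_L, \psi_{d/2}(\x^{(L)})\bigr)$ and $\bigl(r_R, \psi_{d/2}(\x^{(R)})\bigr)$ as a product of two copies of the claimed $(d/2)$-dimensional density, in which $r_L := \norm{\x^{(L)}}_2$ and $r_R := \norm{\x^{(R)}}_2$ are, by Fact~\ref{lem_norm_gaussian}, independent chi-$(d/2)$ variables.

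To finish, I would perform the final change of variables $(r_L, r_R) \to (r, \psi^{(\log_2 d)}_1)$ via $r_L = r\cos\psi$, $r_R = r\sin\psi$, whose Jacobian is $r$. This is precisely the setting of the preceding lemma with parameter $d/2$: the joint density of two i.i.d.\ generalized-gamma variables separates after this substitution into the chi-$d$ density of Fact~\ref{lem_norm_gaussian} times $\frac{\Gamma(d/2)}{2^{d/2-2}\Gamma(d/4)^2}\sin^{d/2-1}(2\psi)$, which matches the stated level-$\log_2 d$ density after identifying $\ell = \log_2 d$, so that $2^{\ell-1} = d/2$ and $2^{\ell-2} = d/4$. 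Substituting back into the inductive product and merging the level-$\ell$ angle factors coming from the two halves into a single factor of length $d/2^\ell$ recovers exactly the joint density claimed in the lemma.

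The main obstacle is essentially combinatorial bookkeeping rather than new probabilistic content: one must carefully verify that the concatenation of the level-$\ell$ angle vectors from $\x^{(L)}$ and $\x^{(R)}$ indexes correctly into the ambient level-$\ell$ vector in \cref{def_cartesian_to_polar}, and that the exponent $2^{\ell-1}-1$ together with its normalizing constant is consistent under the relabeling used at the top level. Once this indexing is in place, full joint independence of $(r, \psi^{(1)}, \ldots, \psi^{(\log_2 d)})$ is automatic, because the $(r_L, r_R) \to (r, \psi)$ substitution separates as a product and the induction hypothesis already delivers independence within each half.
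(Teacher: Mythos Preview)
Your proposal is correct and follows essentially the same route as the paper: induction on $d$, base case $d=2$ via the classical polar decomposition, and the inductive step by splitting $\x$ into independent halves, applying the hypothesis to each, and then performing the $(r_L,r_R)\to(r,\psi^{(\log_2 d)})$ change of variables with Jacobian $r$. The only cosmetic difference is that you invoke the preceding lemma to separate the top-level radius and angle, whereas the paper carries out that same computation inline within the inductive step.
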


\begin{proof}
    The proof is by induction on $d$. First for the base of induction we prove the result in dimension $d=2$. So we prove that for a 2-dimensional random Gaussian vector $\y = (y_1, y_2) \in \RR^2$ if $(r, \theta)$ is the polar representation of this vector then following holds:
    \[
    f_{R, \Theta}(r, \theta ) = \frac{1}{2\pi} \cdot r \exp\left( -r^2/2 \right),
    \]
    To prove this, let $f_Y(\y)$ be the probability density function of the vector random variable $\y$. We know $\y$ has a normal distribution so we have:
    \begin{align*}
    f_{R, \Theta}(r, \theta ) &= r \cdot f_Y(\y) = r \cdot \frac{1}{2\pi} e^{-\frac{y_1^2 + y_2^2}{2}} = \frac{1}{2\pi} \cdot r e^{-\frac{r^2}{2}},
    \end{align*}
    where the first equality above follows from the change of variable from $(y_1, y_2)$ to $r = \sqrt{y_1^2 + y_2^2}$ and $\theta = \tan^{-1}(y_2 / y_1)$.
    This proves the base of induction for $d=2$.

    Now we prove the inductive step. 
    Suppose that the lemma holds for dimension $d/2$ and we want to prove it for dimension $d$. 
    Denote
    $\theta := \psi^{(\log_2d)}$, $\bphi_1 := \left( \psi^{(l)}_{1: d/2^{l+1}} \right)_{\ell=1}^{\log_2d-1}$, $\bphi_2 := \left( \psi^{(l)}_{d/2^{\ell+1}+1: d/2^\ell} \right)_{\ell=1}^{\log_2d-1}$, $r_1 := \norm{\x_{1:d/2}}$, and $r_2 := \norm{\x_{d/2+1:d}}$.
    Essentially we sliced all the angle vectors $\psi^{(\ell)}$ in half and named the collection of first half vectors $\bphi_1$ and the collection of second halves $\bphi_2$.
    Using the definition of $\psi^{(\ell)}$'s in \cref{def_cartesian_to_polar}, $\bphi_1$ is exactly the polar transformation of $\x_{1:d/2}$, and $\bphi_2$ is the polar transformation of $\x_{d/2+1:d}$, so by the definition of $\psi_{d}(\x)$ in the lemma statement we have $\bphi_1 = \psi_{d/2}(\x_{1:d/2})$ and $\bphi_2 = \psi_{d/2}(\x_{d/2+1:d})$. 
    Thus, we can write:
    \begin{align}
    &f_{R, \Psi_d}(r, \psi_d(\x)) 
    = f_{R, \Theta, \Phi_1, \Phi_2}(r, \theta, \bphi_1, \bphi_2) \nonumber \\
    &= r \cdot f_{R_1, R_2, \Phi_1, \Phi_2}(r \cos \theta, r \sin \theta, \bphi_1, \bphi_2) \nonumber \\
    &= r \cdot f_{R_1, \Phi_1}(r \cos \theta, \bphi_1) \cdot f_{R_2, \Phi_2}( r \sin \theta, \bphi_2) \nonumber \\
    &= r \cdot f_{R, \Psi_{d/2}}(r_1, \bphi_1) \cdot f_{R, \Psi_{d/2}}( r_2, \bphi_2), \label{pdf_decompose_separable}
    \end{align}
    where the third line above follows from the change of variable from $(r_1, r_2) = (r\cos\theta, r\sin\theta)$ to $r = \sqrt{r_1^2 + r_2^2}$ and $\theta = \tan^{-1}(r_2 / r_1)$.
    In the fourth line above we used the definition of $\theta = \psi^{(\log_2d)} = \tan^{-1}\left( \frac{\norm{\x_{d/2+1:d}}_2 }{ \norm{\x_{1:d/2}}_2 } \right)$ from \cref{def_cartesian_to_polar}.
    
    Now if we let $f_{\Psi_{d/2}}(\bphi_1) := \prod_{\ell=1}^{\log_2d-1} f_{\Psi^{(\ell)}}\left( \psi^{(\ell)}_{1:d/2^{\ell+1}} \right)$ and $f_{\Psi_{d/2}}(\bphi_2) := \prod_{\ell=1}^{\log_2d-1} f_{\Psi^{(\ell)}}\left( \psi^{(\ell)}_{d/2^{\ell+1}+1:d/2^\ell} \right)$, by the inductive hypothesis we have  $f_{R, \Psi_{d/2}}(r_1, \bphi_1) = \frac{2}{2^{d/4} \cdot \Gamma(d/4)} r_1^{d/2-1} \exp\left( -r_1^2/2 \right) \cdot f_{\Psi_{d/2}}(\bphi_1)$ and $f_{R, \Psi_{d/2}}( r_2, \bphi_2) = \frac{2}{2^{d/4} \cdot \Gamma(d/4)} r_2^{d/2-1} \exp\left( -r_2^2 /2 \right) \cdot f_{\Psi_{d/2}}(\bphi_2)$.
    Plugging these values into \cref{pdf_decompose_separable} gives:
    \begin{align}
    f_{R, \Psi_{d}}(r, \psi_{d}(\x)) &= \frac{4\cdot (r_1 r_2)^{d-1}}{2^{d/2} \Gamma(d/4)^2} \exp\left( -r^2/2 \right) \cdot f_{\Psi_{d/2}}(\bphi_1) \cdot f_{\Psi_{d/2}}(\bphi_2)\nonumber \\
    &= \frac{2 r^{d-1} \cdot \sin^{d/2-1} (2\theta)}{2^{3d/2-2} \cdot \Gamma(d/4)^2} e^{ -r^2/2 }  \cdot f_{\Psi_{d/2}}(\bphi_1) \cdot f_{\Psi_{d/2}}(\bphi_2) \nonumber \\
    &= f_R(r) \cdot \frac{\Gamma(d/2)\cdot \sin^{d/2-1} (2\theta)}{2^{d/2-2} \cdot \Gamma(d/4)^2} f_{\Psi_{d/2}}(\bphi_1) \cdot f_{\Psi_{d/2}}(\bphi_2) \nonumber\\
    &= f_R(r) \cdot f_{\Psi_{d}}\left( \psi_{d}(\x) \right),
    \end{align}
    which completes the inductive proof of this lemma.
\end{proof}

\cref{lem_polar_dist} demonstrates that the angles of Gaussian vectors in polar coordinates have independent distributions, as the probability density function is separable. 
Moreover, all angles within the same level share identical distributions.
Specifically, at level $\ell$ all angles follow the distribution $\psi^{(\ell)}_i \sim \prod_{i=1}^{d/2^\ell} \frac{\Gamma(2^{\ell-1})}{2^{2^{\ell-1} - 2} \cdot \Gamma(2^{\ell-2})^2} \sin^{2^{\ell-1}-1}\left( 2 \psi^{(\ell)}_i \right)$.
This density becomes increasingly concentrated around $\pi/4$, particularly at higher levels $\ell$. 
This property is highly beneficial for reducing quantization error for the angles at higher levels.

\begin{algorithm}[t!]
\caption{PolarQuant}\label{alg:polar-quant}
\begin{algorithmic}[1]
    \STATE{\bfseries input:} embedding $\X \in \RR^{n \times d}$, 
    precondition matrix $\S \in \RR^{d \times d}$, bit width $b$
    
    {\ttfamily\textcolor{blue}{// Cartesian to Polar transform}} 
    \STATE $\R_{i}, \bPsi^{(1)}_{i}, \dots, {\boldsymbol \Psi}^{(\log_2 d)}_{i} \gets $ \textsc{Polar}$(\X_{i} \cdot \S)$ for $i \in [n]$
    
    {\ttfamily\textcolor{blue}{// Codebook Construction}}
    \STATE Find partition intervals and centroids $( I^{(\ell)}_k, \theta^{(\ell)}_k)_{k \in [2^b]}$ of $\bPsi^{(\ell)} \in \RR^{n \times (d/2^\ell)}$ that minimize the cost in \cref{eq_quant_partition_centroid} for $\ell \in [\log_2 d]$ 
    (See \cref{sec:practice} for details)\label{alg:codebook_construction}
    
    {\ttfamily\textcolor{blue}{// Angles Quantization}} \\
    \STATE
    $\J_i^{(\ell)} \gets \textsc{Quant}\left(\bPsi_i^{(\ell)}, (I^{(\ell)}_k, \theta^{(\ell)}_k)_{k \in [2^b]}\right)$ for $i \in [n]$ and $\ell \in [\log_2 d]$
    \STATE {\bf output:} 
    $\R \in \RR^{n \times 1}, \J^{(1)}\in [2^b]^{n \times d/2}, \dots, \J^{(\log_2 d)} \in [2^b]^{n \times 1}, ( I^{(\ell)}_k, \theta^{(\ell)}_k)_{k \in [2^b]}$
   \\\hrulefill
   \STATE{\bf Procedure} \textsc{Polar} ($\y$) 
   \STATE $\r^{(0)} \gets \y \in \RR^d$
   \FOR{$\ell = 1, \dots, \log_2d$}
   \FOR{$j = 1, \dots ,{d}/{2^{\ell}}$}
   \STATE $\bpsi^{(\ell)}_j \leftarrow \tan^{-1}\left(\r^{(\ell-1)}_{2j} / \r^{(\ell-1)}_{2j-1}\right)$
   \STATE $\r^{(\ell)}_j \leftarrow \norm{\r^{(\ell-1)}_{2j-1:2j}}_2$
   \ENDFOR{}
   \ENDFOR{}
   \STATE{\bf output:} $\r^{(\log_2d)}, {\bpsi}^{(1)}, \ldots, \bpsi^{(\log_2d)}$
   \\\hrulefill
   \STATE{\bf Procedure} \textsc{Quant} $\left(\bpsi, (I_k, \theta_k)_{k \in [2^b]}\right)$
   \STATE $\boldsymbol{j}_i \gets \mathop{\text{argmin}}_{k \in [2^b]}  \abs{\theta_k - \bpsi_{i}}$ for $i \in [d']$ s.t. $\bpsi \in \RR^{d'}$
    \STATE {\bf output:} $\boldsymbol{j}$
   \\\hrulefill
   \STATE{\bf Procedure} \textsc{DeQuant}$(\r, (\boldsymbol{j}^{(\ell)})_{\ell \in [\log_2 d]}, (\theta_k^{(\ell)})_{k \in [2^b]}, \S)$
   \FOR{$\ell=\log_2d, \dots, 1$}
   \FOR{$j=1, \dots, d/2^\ell $}
       \STATE $i \gets {\boldsymbol j}^{(\ell)}_j$ 
       \STATE $\r^{(\ell-1)}_{2j-1} \gets \r^{(\ell)}_{j} \cdot \cos \theta^{(\ell)}_{i}$ 
       \STATE $\r^{(\ell-1)}_{2j} \gets \r^{(\ell)}_{j} \cdot \sin \theta^{(\ell)}_{i}$ 
   \ENDFOR
   \ENDFOR{}
   \STATE {\bf output: $\r^{(0)} \cdot \S^\top$} 
\end{algorithmic}
\end{algorithm}

\subsection{PolarQuant Algorithm and Main Theorem} \label{sec:polar-quant}
PolarQuant starts by first applying random preconditioning, then transforming the vectors into polar coordinates, and finally quantizing each angle. 
Since \cref{lem_polar_dist} shows that the angles in polar coordinates are independent random variables, each angle can be quantized independently to minimize the total mean squared error.
Jointly quantizing multiple angle coordinates offers no additional benefit due to their independence, making our approach both computationally efficient and effective.
Therefore, we can focus on one angle at level $l$ and design optimal quantization scheme for it so as to minimize the mean squared error.

Consider an angle $\psi^{(\ell)}_i$ at some level $\ell$. 
According to \cref{lem_polar_dist}, its values lie within the range $[0, \pi/2]$ for $\ell\ge2$ and for $\ell=1$ it takes values in the range $[0, 2\pi)$ with a probability density function given by $f_\ell(\psi^{(\ell)}_i) := \frac{\Gamma(2^{\ell-1})}{2^{2^{\ell-1} - 2} \cdot \Gamma(2^{\ell-2})^2} \sin^{2^{\ell-1}-1}\left( 2 \psi^{(\ell)}_i \right)$.
The goal of quantization to $b$-bits is to partition the range $[0, \pi/2]$ (or $[0, 2\pi)$ in case of $\ell=1$) into $2^b$ intervals $I^{(\ell)}_1, I^{(\ell)}_2, \cdots I^{(\ell)}_{2^b}$ and find corresponding centroids $\theta^{(\ell)}_1, \theta^{(\ell)}_2, \ldots \theta^{(\ell)}_{2^b}$ such that the following is mean squared error is minimized:
\begin{equation}\label{eq_quant_partition_centroid}
\E_{\psi^{(\ell)}_i \sim f_\ell(\psi^{(\ell)}_i)}\left[ \sum_{j \in [2^b]:~ \psi^{(\ell)}_i \in I^{(\ell)}_j} \left| \psi^{(\ell)}_i - \theta^{(\ell)}_j \right|^2 \right].
\end{equation}
This problem is a continuous analog of the k-means clustering problem in dimension 1. 
Since we have an explicit formula for the p.d.f. of angle $\psi^{(\ell)}_i \sim f_\ell(\psi^{(\ell)}_i) = \frac{\Gamma(2^{\ell-1})}{2^{2^{\ell-1} - 2} \cdot \Gamma(2^{\ell-2})^2} \sin^{2^{\ell-1}-1}\left( 2 \psi^{(\ell)}_i \right)$ the optimal interval partitions and centroids for \cref{eq_quant_partition_centroid} can be efficiently computed using numerical methods. 
For example, one can run k-means clustering on the gathered angle values which can be considered samples from the distribution. 
This approach ensures minimal quantization error for each angle independently and the overall reconstruction error as well.

We provide a pseudocode of PolarQuant in \cref{alg:polar-quant}.
Our main result and error bound are proved in the following.
\begin{theorem}\label{main_thrm}
For a $d$-dimensional vector $\x \sim N(0, I_d)$, the polar quantization scheme in \cref{alg:polar-quant} uses $O(\log 1/\varepsilon)$ bits per coordinate + the space necessary to store $\|{\x}\|_2$, while reconstructing a vector $\x'$ from such a representation satisfying
\begin{align*}
    \E[\|\x - \x'\|_2^2] = \varepsilon \cdot \|\x\|_2^2.
\end{align*}
\end{theorem}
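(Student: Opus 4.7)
The plan is to write $\x = r \cdot T(V)$, where $r = \|\x\|_2$ and $T(V) \in \RR^d$ is the unit-sphere image of the collection of polar angles $V = \{\psi^{(\ell)}_j\}$, and to analyze the reconstructed vector $\x' = r \cdot T(\hat V)$ obtained by storing $r$ exactly (in the reserved slot for $\|\x\|_2$) and quantizing each $\psi^{(\ell)}_j \to \hat\psi^{(\ell)}_j$ with $b$ bits using the MMSE codebook from \cref{alg:polar-quant}. By \cref{lem_polar_dist}, $r$ is independent of $V$, all angles are jointly independent, and their level-$\ell$ marginals $f_\ell$ are explicit. Setting $\delta^{(\ell)}_j := \psi^{(\ell)}_j - \hat\psi^{(\ell)}_j$ and $\sigma_\ell^2 := \E[(\delta^{(\ell)}_j)^2]$, I have $\|\x - \x'\|^2 = r^2\|T(V) - T(\hat V)\|^2$ with $r \perp V$, so it suffices to bound $\E[\|T(V) - T(\hat V)\|^2]$ and then multiply by $\E[r^2] = d$.

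The central step is a recursion on the binary tree of chunks. Let $T_{\ell, j}$ denote the sub-unit-vector of the size-$2^\ell$ chunk indexed by $(\ell, j)$; the recursive construction in \cref{def_cartesian_to_polar} gives $T_{\ell, j} = \bigl(\cos\psi^{(\ell)}_j \cdot T_{\ell-1, 2j-1},\; \sin\psi^{(\ell)}_j \cdot T_{\ell-1, 2j}\bigr)$ and analogously for $\hat T_{\ell, j}$. Writing $s_{\ell, j} := 1 - T_{\ell, j} \cdot \hat T_{\ell, j}$, a direct expansion yields
\begin{equation*}
s_{\ell, j} = (1 - \cos\delta^{(\ell)}_j) + \cos\psi^{(\ell)}_j \cos\hat\psi^{(\ell)}_j \cdot s_{\ell-1, 2j-1} + \sin\psi^{(\ell)}_j \sin\hat\psi^{(\ell)}_j \cdot s_{\ell-1, 2j}.
\end{equation*}
By \cref{lem_polar_dist} the angles in disjoint sub-trees are independent, so conditioning on the level-$\ell$ randomness and using the symmetry $\E[s_{\ell-1, 2j-1}] = \E[s_{\ell-1, 2j}] =: \bar s_{\ell-1}$, the two child terms combine via $\cos\psi\cos\hat\psi + \sin\psi\sin\hat\psi = \cos\delta^{(\ell)}_j$, which is at most $1$. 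Together with $\bar s_{\ell-1} \ge 0$ and $1 - \cos\delta \le \delta^2/2$, this collapses to the clean additive recursion $\bar s_\ell \le \bar s_{\ell-1} + \sigma_\ell^2/2$. Unrolling from the base case $s_{1, j} = 1 - \cos\delta^{(1)}_j$ yields $\bar s_{\log_2 d} \le \tfrac{1}{2}\sum_\ell \sigma_\ell^2$, whence $\E[\|\x - \x'\|^2] = 2\,\E[r^2]\,\bar s_{\log_2 d} \le d \sum_\ell \sigma_\ell^2$.

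To finish the bit count, the per-level quantization MSE is handled by MMSE $b$-bit quantization of the explicit marginal $f_\ell$. For $\ell = 1$ the angle is uniform on $[0, 2\pi)$ and uniform quantization gives $\sigma_1^2 = O(2^{-2b})$; for $\ell \ge 2$, the unnamed lemma preceding this theorem gives $\mathrm{Var}(\psi^{(\ell)}_j) = O(2^{-(\ell-1)/2})$, and the MMSE $2^b$-level quantizer adapted to the concentrated density $f_\ell$ achieves $\sigma_\ell^2 = O(2^{-(\ell-1)/2} \cdot 2^{-2b})$. The geometric sum gives $\sum_\ell \sigma_\ell^2 = O(2^{-2b})$, hence $\E[\|\x - \x'\|^2] = O(d \cdot 2^{-2b})$; choosing $b = \Theta(\log(1/\varepsilon))$ hits the target $\varepsilon \cdot \E[\|\x\|^2]$ at cost $(d-1) b = O(d \log(1/\varepsilon))$ bits for the angles, i.e., $O(\log(1/\varepsilon))$ bits per coordinate plus the reserved slot for $\|\x\|_2$. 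The main obstacle will be the recursion itself: the coefficients $\cos\psi\cos\hat\psi$ and $\sin\psi\sin\hat\psi$ in front of the child $s$'s are random and correlated with $\hat\psi^{(\ell)}_j$, so one must carefully condition on all level-$\ell$ angles, apply sub-tree independence from \cref{lem_polar_dist} to pull $\E[s_{\ell-1,\cdot}]$ outside, and only then invoke $\cos\delta \le 1$ to turn the potentially messy coupling into the clean additive recursion above.
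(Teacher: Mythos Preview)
Your error recursion via the inner-product defect $s_{\ell,j} = 1 - T_{\ell,j}\cdot\hat T_{\ell,j}$ is a genuinely different and cleaner decomposition than the paper's. The paper works directly with the squared Euclidean error $(X-X')^2+(Y-Y')^2$ at each level, splits each term via Young's inequality $(a+b)^2 \le (1+1/\alpha)a^2+(1+\alpha)b^2$ into an angle-error piece and a radius-error piece, and unrolls a recursion carrying a multiplicative $(1+\alpha)^t$ blowup that must then be tamed by choosing $\alpha$ and by using level-dependent codebook sizes $\Theta(i/\sqrt{\varepsilon})$. Your additive recursion $\bar s_\ell \le \bar s_{\ell-1} + \sigma_\ell^2/2$ avoids this bookkeeping entirely, because the two child coefficients collapse exactly to $\E[\cos\delta^{(\ell)}_j]\,\bar s_{\ell-1} \le \bar s_{\ell-1}$ once you use the independence in \cref{lem_polar_dist}; this also lets you use a single bit-width $b$ across all levels rather than the paper's level-dependent allocation.

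There is, however, a genuine gap in your quantization step. You assert that the MMSE $2^b$-level quantizer for the density $f_\ell$ achieves $\sigma_\ell^2 = O\bigl(\mathrm{Var}(\psi^{(\ell)})\cdot 2^{-2b}\bigr)$, but this is not a black-box fact: the Panter--Dite bound is only asymptotic and is governed by $\|f\|_{1/3}$ rather than the variance, and for general bounded random variables one cannot conclude $O(\sigma^2/k^2)$ with $k$ centroids. The paper devotes a separate lemma to exactly this step, building an explicit codebook of size $k = \Omega(\log(1/\sigma)/\sqrt{\varepsilon})$ (a uniform grid on $[\mu-\sigma,\mu+\sigma]$ glued to a geometric grid outside) that achieves MSE $\le \varepsilon\sigma^2$ for \emph{any} variance-$\sigma^2$ random variable on $[0,\pi/2]$. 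If you feed that lemma into your framework you obtain $\sigma_\ell^2 = O\bigl(\ell^2\cdot\mathrm{Var}(\psi^{(\ell)})\cdot 2^{-2b}\bigr)$ rather than your claimed bound; the extra $\ell^2$ factor is harmless because the geometric decay $2^{-(\ell-1)/2}$ still absorbs it, and your bit count survives. So the gap is repairable, but it sits precisely where the paper does nontrivial work, and it should not be left as an assertion.
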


The proof of \cref{main_thrm} can be found in \cref{sec:proof_main_thrm}.
We note that a scheme which uses a deterministic $\varepsilon$-net $\mathcal{N}$ of the unit sphere $\mathbb{S}^{d-1}$, with $|\mathcal{N}| = O(1/\varepsilon)^d$ and rounds the vector $\hat{\x} = \x/\|\x\|_2$ also uses $O(\log 1/\varepsilon)$ bits per coordinate while achieving the above bounds in the worst case instead of in expectation over the Gaussian distribution. But our construction (i) gives the flexibility to vary the size of the codebook used per each level depending on the resource constraints and as the above theorem shows, can approach the same quality as pinning to an $\varepsilon$-net on average, (ii) does not need to store a $|\mathcal{N}|$-size codebook which is impractical even for modest sizes of $d$ and (iii) has a fast decoding/encoding implementation.  

\section{KV Cache Quantization with PolarQuant} \label{sec:polar-quant-kv}

In this section, we describe how PolarQuant can be applied to the KV cache problem and our practical implementation. Formally, given a stream of $(\q_1,\k_1,\v_1),\ldots,(\q_n,\k_n,\v_n)$, where $\q_i,\k_i,\v_i \in \RR^{d}$ are query, key and value embeddings at $i$-th generation step for all $i \in [n]$. Let $\K_{:i}, \V_{:i} \in \RR^{i \times d}$ be matrices defined by stacking $\k_1, \dots, \k_i$ and $\v_1, \dots, \v_j$ in their rows, respectively. The goal is to compute:
\begin{equation}\label{eq:attn-def}
    \text{softmax}\left(\frac{\K_{:i} \cdot \q_i}{\sqrt{d}}\right)^T\cdot \V_{:i}. 
\end{equation} 

For an efficient token generation, the KV cache at $i$-th generation step $(\K_{:i}, \V_{:i})$ are stored in the memory. To reduce the memory space, we invoke PolarQuant (\cref{alg:polar-quant}) on these embeddings. 
Let ${\whK}_{:i}, {\whV}_{:i} \in \RR^{i \times d}$ be their dequantizations using \textsc{DeQuant} procedure in \cref{alg:polar-quant}. Then, we estimate \cref{eq:attn-def} by computing
\begin{align} \label{eq:attn-approx}
\text{softmax}\left({\frac{{\whK}_{:i}  \cdot \q_i}{\sqrt{d}}}\right) \cdot {\whV}_{:i} .
\end{align}
Note that the na\"ive cache requires $d \cdot \bfpn$ memory space to store each $d$-dimensional embedding where $\bfpn$ is the number of bits to represent a single floating-point number. If we quantize $\log_2d$ level angles with $b$ bits each and keep centroids in $\bfpn$ bits, the memory space becomes $\left( \bfpn + (d-1)b\right)$. 
For example, \llama{} is represented by $\bfpn=16$ bits and has $d=128$. For $b=3$, we can save the memory space $4.008$ times. In \cref{sec:exp}, the PolarQuant with KV cache marginally degrades the performance of LLMs on various tasks.

\begin{figure*}[t]
    \vspace{-0.1in}
    \centering
    \begin{subfigure}[t]{0.48\textwidth}
        \includegraphics[width=\textwidth]{./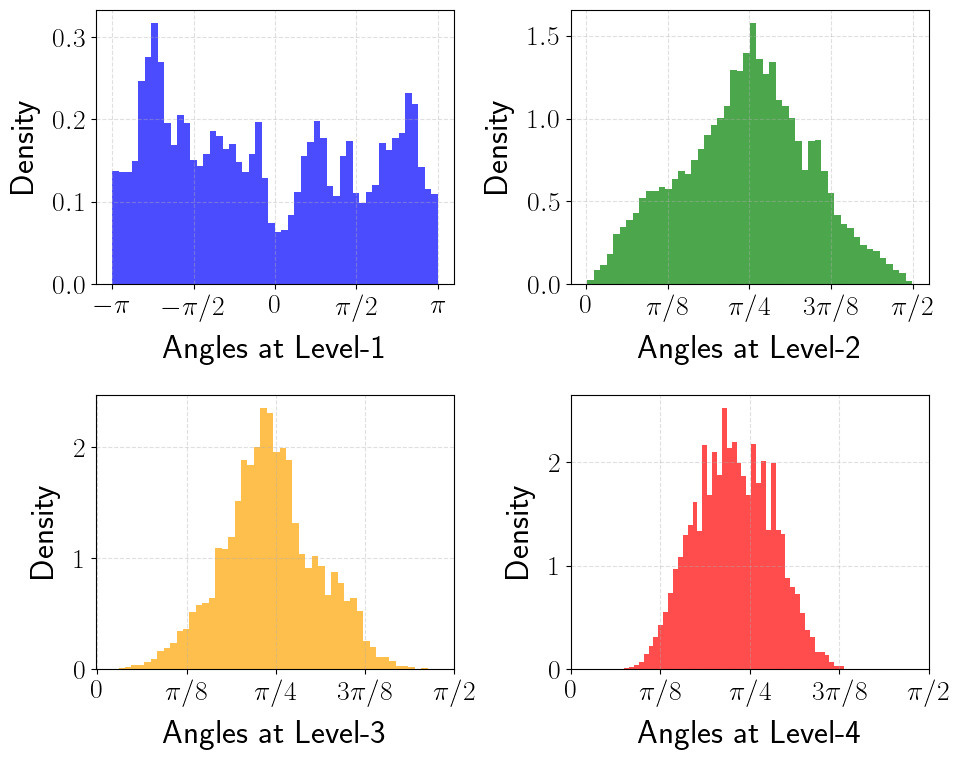}
        \caption{without random preconditioning}
    \end{subfigure}
    \begin{subfigure}[t]{0.48\textwidth}
        \includegraphics[width=\textwidth]{./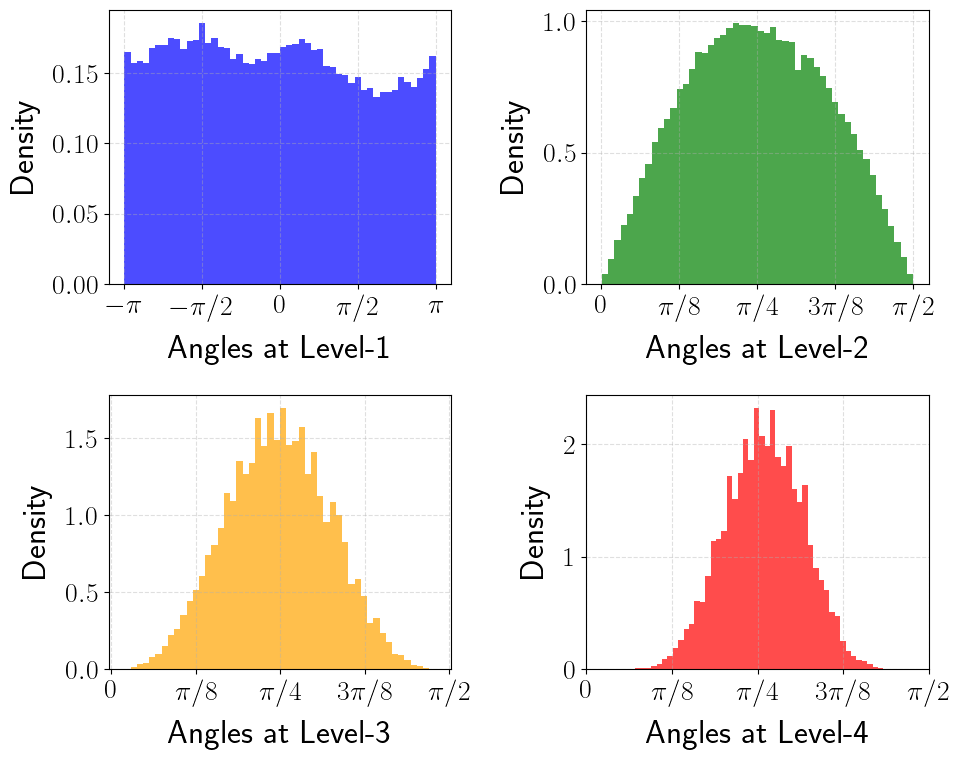}
        \caption{with random preconditioning}
    \end{subfigure}
    \vspace{-0.05in}
    \caption{Distributions of angles of polar transformed key embeddings (a) with and (b) without random preconditioning. Preconditioning flattens the angle distribution and removes outliers which allows angle quantization more accurately.} \label{fig:angle-distribution}
    \vspace{-0.1in}
\end{figure*}

\subsection{Practical Implementation} \label{sec:practice}

The PolarQuant algorithm recursively reduces the dimension of radii by half until the input has dimension 1. 
We recurse on the polar transformation for a constant $L=4$ levels. 
Thus, for an embedding of dimension $d$, we obtain $d/16$-dimensional radii and $15d/16$ angle values. 
We also define different numbers of bits for each quantization level: $b=4$ bits for the first level, and $b=2$ bits for the remaining levels. This is because the range of angle at the first level $[0, 2\pi)$ is $4$ times wider than the others $[0, \pi/2]$. Consequently, the representation of a block of 16 coordinates uses $b_{\text{FPN}} + 32 + 8 + 4 + 2 = b_{\text{FPN}} + 46$ bits that translates to $62 / 16 = 3.875$ bits per coordinate when $b_{\text{FPN}} = 16$ bits.

We implement PolarQuant using the Pytorch~\cite{paszke2019pytorch} framework. Since the smallest data type is represented in 8 bits ~($\mathtt{torch.uint8}$), we pack quantized angle indices into 8-bit unit. To accelerate computation on GPU clusters, we implement CUDA kernels for two key operations: (1) the product of query vectors with the dequantized key cache, i.e., ${\whK}_{:i}  \cdot \q_i$, and (2) the product of attention scores with the dequantized value cache as per \cref{eq:attn-approx}. 
For the preconditioning matrix $\S$, we generate a random rotational matrix. The matrix $\S$ is shared across key and value embeddings, as well as all layers and attention heads in the Transformer architecture.

For angle codebook construction (line 3 in \cref{alg:polar-quant}), we use the 1-D k-means++ clustering on either online angles obtained from polar-transformed inputs or offline precomputed angles. Both approaches approximate the solution to \cref{eq_quant_partition_centroid} by discretizing with samples from angle distributions. 
While online approach requires additional clustering computation during every prefill stage, this one-time cost is offset by improved performance compared to the offline approach. We present detailed runtime and performance comparisons in \cref{sec:exp}.


\begin{figure*}[t]
    \centering
    \vspace{-0.12in}
    \begin{subfigure}[t]{0.3\textwidth}
        \centering
        \includegraphics[width=\textwidth]{./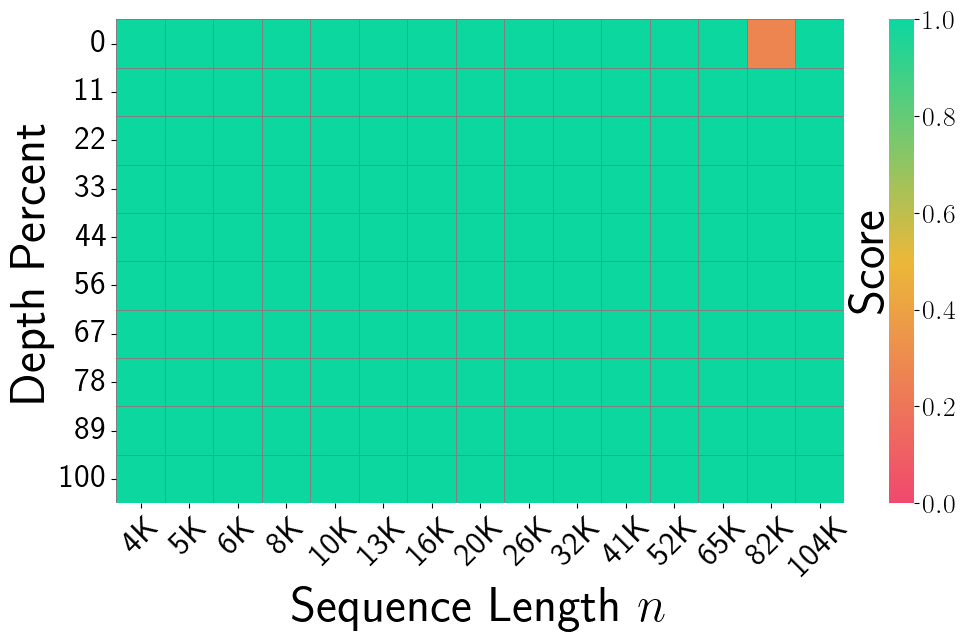}
        \caption{Exact (16 bits), Score: $0.995$}
        \vspace{-0.02in}
    \end{subfigure}
    \begin{subfigure}[t]{0.3\textwidth}
        \centering
        \includegraphics[width=\textwidth]{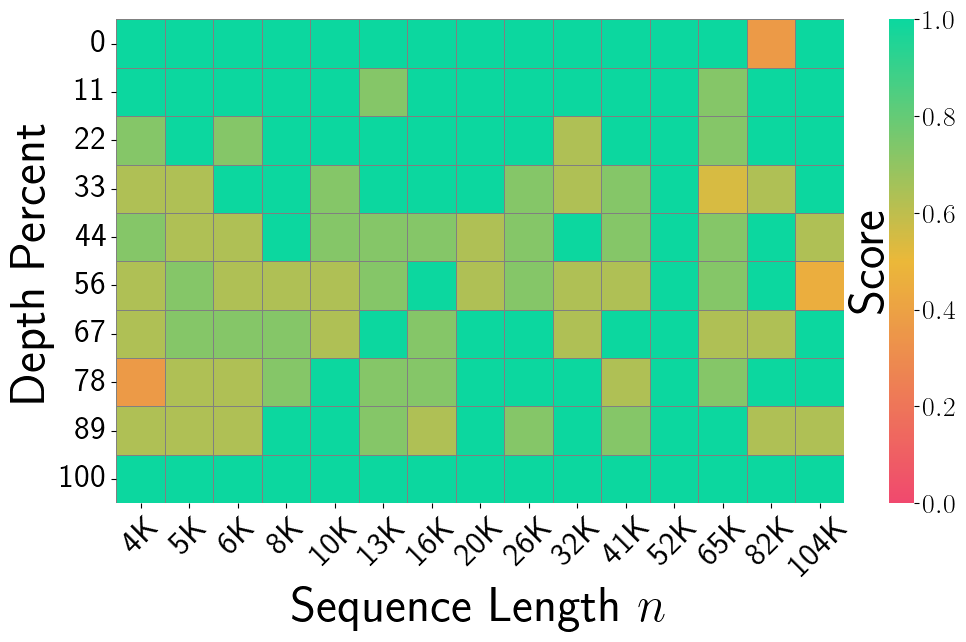}
        \caption{SnapKV, Score: $0.858$}
        \vspace{-0.02in}
    \end{subfigure}
    \begin{subfigure}[t]{0.3\textwidth}
        \centering
        \includegraphics[width=\textwidth]{./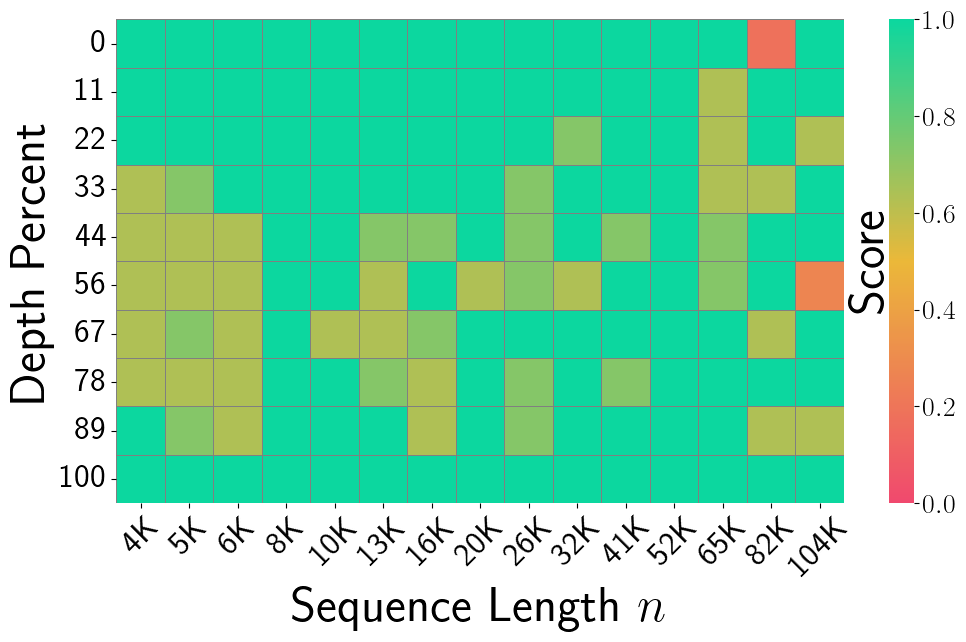}
        \caption{PyramidKV, Score: $0.891$}
        \vspace{-0.02in}
    \end{subfigure}
    \begin{subfigure}[t]{0.3\textwidth}
        \centering
        \includegraphics[width=\textwidth]{./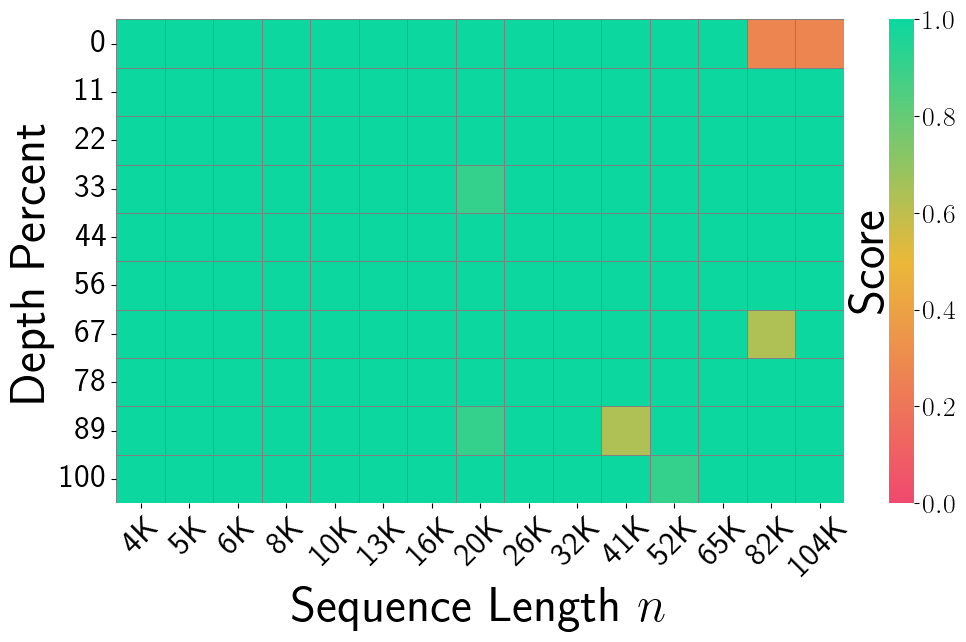}
        \caption{KIVI, Score: $0.984$}
    \end{subfigure}
    \begin{subfigure}[t]{0.3\textwidth}
        \centering
        \includegraphics[width=\textwidth]{./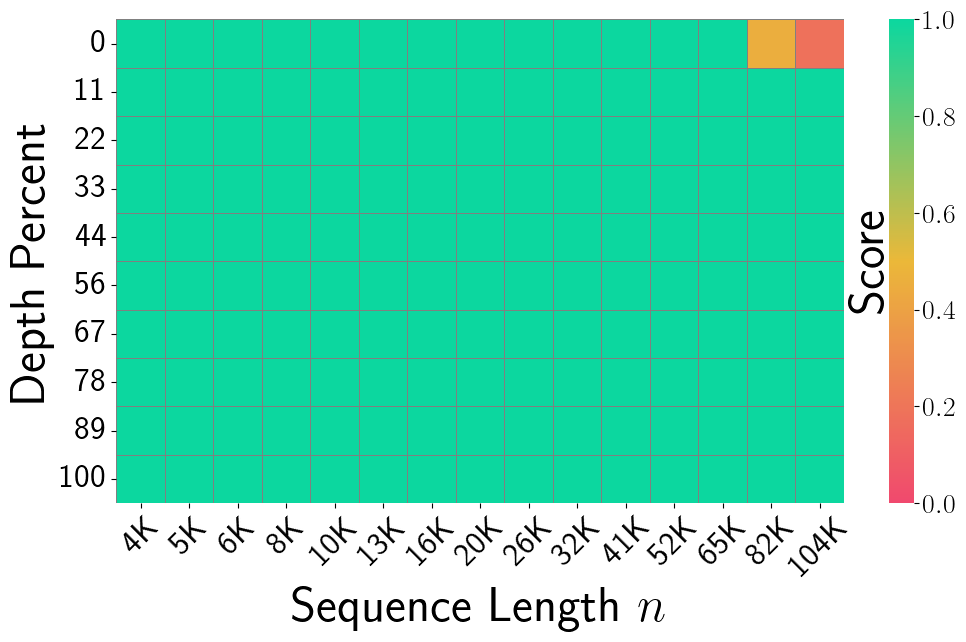}
        \caption{PolarQuant, Score: $0.991$}
    \end{subfigure}
    \begin{subfigure}[t]{0.3\textwidth}
        \centering
        \includegraphics[width=\textwidth]{./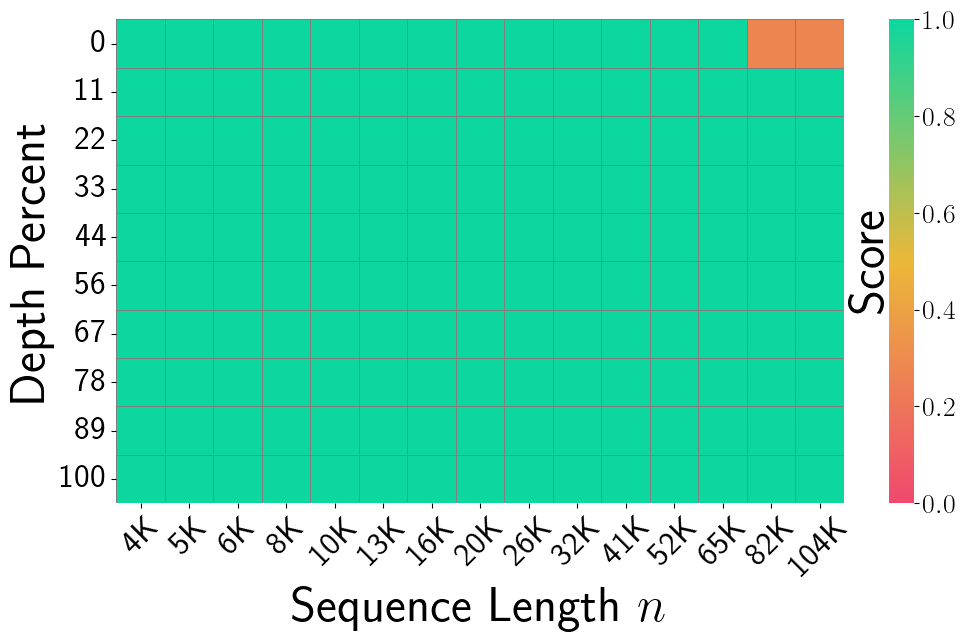}
        \caption{PolarQuant-R, Score: $0.990$}
    \end{subfigure}
    \vspace{-0.1in}
    \caption{Needle-In-A-Haystack test using \llama{}. The test spans different depths and context lengths ranging from 4K to 104K. Green/red colors indicate high/low recall scores (higher is better). PolarQuant shows the best performance.} \label{fig:niah}
    \vspace{-0.1in}
\end{figure*}

\section{Experiments}\label{sec:exp}All experiments are performed with a single NVIDIA RTX A6000 GPU with 48GB VRAM.

\vspace{-0.1in}
\subsection{Random Precondition on KV Cache}

We first explore the effectiveness of preconditioning. In particular, we choose a single prompt from Qasper dataset in LongBench~\cite{bai2023longbench} and extract the corresponding KV cache. To observe how preconditioning improves, we transform the KV cache into 4-level polar coordinates and plot their angle distributions of the key cache. Note that the first level angles are range in $[0, 2\pi)$ and the rest are in $[0, \pi/2]$. The results are illustrated in \cref{fig:angle-distribution}. As shown in \cref{lem_polar_dist}, the distribution of angles get predictably sharper around $\pi/4$ as the level increases. Moreover, we observe that at the first level the preconditioning flattens the angle distribution and removes outliers. This allows us to quantize angles in the KV cache more accurately.

\subsection{Needle-In-A-Haystack} \label{sec:niah}

Next we evaluate our method for the ``Needle-In-A-Haystack" test~\cite{niah}. It asks the model to retrieve the information in a given sentence where the sentence (the ``needle'') is placed in an arbitrary location of a long document (the ``haystack''). We follow the same setting from~\citet{fu2024data} and use the \llama{} to run the test. 
We vary the input sequence lengths from 4K to 104K. 
The evaluation is based on the recall score by comparing the hidden sentence. We compare PolarQuant to SnapKV~\cite{li2024snapkv}, PyramidKV~\cite{cai2024pyramidkv} and KIVI~\cite{liu2024kivi}, where we use their implementations from~\cite{jiang2024minference}. 
All methods are set to a compression ratio of $0.25$, i.e., required memory is $\times 0.25$ the full KV cache.  
Specifically, we run our algorithm with and without the preconditioning and refer to them as PolarQuant-R and PolarQuant, respectively. In \cref{fig:niah}, we observe that quantization methods (e.g., KIVI, PolarQuant) outperform token-level compression methods (e.g., SnapKV, PyramidKV). PolarQuant shows better scores than KIVI. Additionally, PolarQuant shows a marginally better score than PolarQuant-R. 

\subsection{End-to-end Generation on LongBench}

We run various KV cache compression algorithms for LongBench datasets~\cite{bai2023longbench}, which encompasses diverse long-text scenarios including single/multi-document question-answering (SQA/MQA), summarization (Sum), few-shot learning (Few), synthetic tasks (Syn), and code completion (Code).
Since the number of generated tokens is small compared to the input sequence length across all datasets, we preserve all new streamed query, key, and value pairs from the generation stage in full precision (16 bits) for all methods. We evaluate PolarQuant against the baseline methods using in \cref{sec:niah} as well as StreamingLLM~\cite{xiao2023efficient} and HeadKV~\cite{fu2024not} on \llama{}. 

We investigate two variants of PolarQuant-R: one using online codebook construction and another using offline one discussed in \cref{sec:practice}. The online variant performs clustering for each individual input prompt and layer, while the offline one employs a single precomputed codebook that is shared across all input prompts, layers, and attention heads. This offline approach is supported by our findings that the angle distribution, when preconditioned, remains consistent regardless of the input.


As reported in \cref{tab:performance_comparison}, our methods achieve superior performance compared to other methods, i.e., the average performance scores are higher by a large margin. This justifies the performance benefits of the quantization of polar coordinates. Moreover, the preconditioned variants (PolarQuant-R) generally demonstrates better performance than the non-preconditioned version. Among them, the online variant performs slightly better than the offline one. 

\setlength{\tabcolsep}{12pt}
\def\arraystretch{1.1}%
\begin{table}[t]
\centering
\caption{LongBench-V1~\cite{bai2023longbench} results of various KV cache compression methods on \llama{}. The best values among compression methods are indicated in {\bf bold}.}
\vspace{0.05in}
\scalebox{0.9}{
\begin{tabular}{lccccccc}
\toprule
\multirow{ 2}{*}{Method} & \multicolumn{6}{c}{Task} & \multirow{ 2}{*}{Average}\\
  & SQA & MQA & Sum & Few & Syn & Code &  \\
\midrule
Exact (16 bits)  & 45.71 & 45.32 & 26.69 & 68.62 & 59.25 & 46.17 & 48.63 \\
\midrule
Snapkv & 38.23 & 42.61 & 19.07 & 64.65 & 59.60 & 43.28 & 44.57 \\
HeadKV & 39.45 & 42.69 & 19.77 & 68.07 & 59.48 & 42.60 & 45.34 \\
PyramidKV & 36.80 & 41.54 & 18.91 & 64.88 & {59.68} & 42.38 & 44.03 \\
StreamingLLM & 25.68 & 35.79 & 20.90 & 56.91 & 58.81 & 32.07 & 38.36 \\
KIVI & 43.38 & 37.81 & {\bf 27.44} & 68.60 & 58.67 & 44.29 & 46.70 \\
PolarQuant & 44.03 &{44.34} & 27.32 & {\bf 68.68} & {\bf 59.82} & 44.46 & 48.11 \\
\makecell[tl]{PolarQuant-R (offline)} & 44.71 & 44.72 & 26.43 & 68.58 & 60.08 & {\bf 45.20} & 48.29 \\
\makecell[tl]{PolarQuant-R (online)} & {\bf 45.45} & {\bf 45.13} & 26.42 & 68.54 & 59.57 & {45.13} & {\bf 48.37} \\
\bottomrule
\end{tabular}
}
\label{tab:performance_comparison}
\end{table}

\begin{table}[h!]
\centering
\caption{Wall-clock runtime comparisons of various KV cache compression methods. The input sequence length is $n=16{,}384$ and the number of generated tokens is $1{,}024$.}
\vspace{0.05in}
\scalebox{0.9}{
\begin{tabular}{lcc}
\toprule 
Method & Prefill Time (sec) & Generation Time (sec) \\
\midrule
Exact (16 bits) & 2.934 & 38.374 \\
SnapKV &  3.438 & 34.053 \\
PyramidKV & 3.428 & 32.732 \\
HeadKV & 3.300 & 34.401 \\
KIVI & 3.590 & 49.564 \\ 
\midrule
PolarQuant & 11.623 & 43.652 \\
\makecell[tl]{PolarQuant-R (online)} & 11.633 & 44.448\\
\makecell[tl]{PolarQuant-R (offline)} & 3.364 & 44.097\\
\bottomrule
\end{tabular} \label{tab:runtime}
}
\end{table}

\vspace{-0.1in}
\subsection{Runtime Analysis}
\vspace{-0.02in}

We evaluate wall-clock runtimes of both prefill and token generation stages. Using the Llama model with an input prompt length of $16{,}384$, we measure the time to generate $1{,}024$ tokens for each method. Table~\ref{tab:runtime} summaries the result. Token eviction approaches (SnapKV, PyramidKV, and HeadKV) demonstrate faster generation times compared to exact and quantization methods, though at the cost of lower quality. Among quantization approaches, our PolarQuant algorithms achieve 14\% faster generation time than the KIVI while maintaining superior performance. These results demonstrate that PolarQuant offers advantages in both computational efficiency and model performance. 
To achieve faster prefill times, we recommend using offline codebook construction, as it significantly reduces runtime by eliminating the need for clustering, though this results in a modest performance trade-off. 
We leave even better codebook construction approaches for future research. 

\section{Conclusion}
We propose PolarQuant, a novel quantization method applied to angles in polar coordinates. We connect it to the random preconditioning which allows us to formalize angle distribution to be quantized. We provide rigorous theoretical bounds on quantization error. When applied to the KV cache compression problem, PolarQuant significantly reduces memory requirements during LLM inference while maintaining model performance. The principles underlying our method extend beyond KV cache compression, offering potential applications in LLM weight quantization and general vector similarity search problems.

\bibliography{paper}
\bibliographystyle{icml2025}

\newpage
\appendix
\onecolumn

\section{Proof of Fact~\ref{lem_norm_gaussian}} \label{sec:proof_lem_norm_gaussian}

\begin{proof}
    The cumulative distribution function (c.d.f) of the random variable $R$ can be computed as follows:
    \begin{align*}
    F_R(r) &:= \Pr_{\x}[\norm{\x}_2 \le r]\\ 
    &= \Pr_{\x}[\norm{\x}_2^2 \le r^2] = \frac{1}{\Gamma(d/2)} \gamma(d/2, r^2/2), 
    \end{align*}
    where the last equality is because the squared norm of $\x$, by definition, is Chi-squared random variable.
    Differentiating the above c.d.f gives us the p.d.f of $R$:
    \[
    f_R(r) = \frac{2}{2^{d/2} \cdot \Gamma(d/2)} r^{d-1} \exp\left( -r^2/2 \right).
    \qedhere\]
\end{proof}

\section{Error Bounds for Polar Quantization}\label{sec:lem_quant_error}
\begin{lemma}\label{lem_var_theta}
    If $X$ and $Y$ are independent non-negative random variables that are sampled from the generalized gamma distribution with probability density function $f_Z(z) = \frac{2}{2^{d/2} \cdot \Gamma(d/2)}z^{d-1}\exp(-z^2/2)$ so that $X^2, Y^2 \sim \chi^2_d$, then the distribution of $\Theta = \tan^{-1}(Y/X)$ has the pdf
    \begin{align*}
        f_{\Theta}(\theta) = \frac{\Gamma(d)}{2^{d-2}\Gamma(d/2)^2}\cdot \sin^{d-1}(2\theta), \quad 0 \le \theta \le \pi / 2.
    \end{align*}
    We also have that $\mu_{\Theta} = \E[\Theta] = \pi/4$ and $\sigma_{\Theta} = \sqrt{\textnormal{Var}(\Theta)} = O(1/\sqrt{d})$.
\end{lemma}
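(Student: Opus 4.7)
The plan is to perform the standard Cartesian-to-polar change of variables on the joint density of $(X,Y)$, read off $f_\Theta$, and then control the first two moments. Since $X,Y$ are independent and supported on $[0,\infty)$, the joint density is
\begin{equation*}
f_{X,Y}(x,y) \;=\; \left(\tfrac{2}{2^{d/2}\Gamma(d/2)}\right)^{2} (xy)^{d-1} e^{-(x^2+y^2)/2}, \qquad x,y\ge 0.
\end{equation*}
I will set $X = R\cos\Theta$, $Y = R\sin\Theta$ with $R\ge 0$ and $\Theta \in [0,\pi/2]$; the Jacobian is $r$. After substitution the density factors as $r \cdot f_{X,Y}(r\cos\theta, r\sin\theta) = g(r)\,h(\theta)$, where $g(r) \propto r^{2d-1} e^{-r^2/2}$ and $h(\theta) \propto (\cos\theta\sin\theta)^{d-1} = 2^{-(d-1)}\sin^{d-1}(2\theta)$. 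Hence $R$ and $\Theta$ are independent, and I can recover the normalizing constant for $f_\Theta$ either by computing $\int_0^\infty r^{2d-1} e^{-r^2/2}\,dr = 2^{d-1}\Gamma(d)$ and tracking constants, or equivalently by using the Wallis/Legendre duplication identity $\Gamma(d) = 2^{d-1}\Gamma(d/2)\Gamma((d+1)/2)/\sqrt{\pi}$. Either route yields the advertised $f_\Theta(\theta) = \frac{\Gamma(d)}{2^{d-2}\Gamma(d/2)^2}\sin^{d-1}(2\theta)$.

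For the mean, I will note that $h(\theta) \propto \sin^{d-1}(2\theta)$ is invariant under the reflection $\theta \mapsto \pi/2 - \theta$, which fixes $\pi/4$. This symmetry immediately gives $\E[\Theta] = \pi/4$ without further computation.

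For the variance, the plan is to change variables $u = 2\theta - \pi/2$, so that $\sin(2\theta) = \cos u$ and
\begin{equation*}
\mathrm{Var}(\Theta) \;=\; \frac{\Gamma(d)}{2^{d+1}\Gamma(d/2)^2}\int_{-\pi/2}^{\pi/2} u^{2}\cos^{d-1}(u)\,du.
\end{equation*}
The prefactor, via Legendre duplication, equals $\tfrac{1}{4}\Gamma((d+1)/2)/\bigl(\sqrt{\pi}\,\Gamma(d/2)\bigr) = \Theta(\sqrt{d})$ by Stirling. For the integral, I will use the elementary bound $\cos u \le e^{-u^{2}/2}$ on $(-\pi/2,\pi/2)$ (which follows because $\ln\cos u + u^{2}/2$ has a nonpositive Taylor series at $0$ and tends to $-\infty$ at the endpoints), giving
\begin{equation*}
\int_{-\pi/2}^{\pi/2} u^{2}\cos^{d-1}(u)\,du \;\le\; \int_{-\infty}^{\infty} u^{2} e^{-(d-1)u^{2}/2}\,du \;=\; \frac{\sqrt{2\pi}}{(d-1)^{3/2}}.
\end{equation*}
Multiplying by the $\Theta(\sqrt{d})$ prefactor gives $\mathrm{Var}(\Theta) = O(1/d)$, hence $\sigma_\Theta = O(1/\sqrt{d})$ as claimed.

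The only step requiring a small amount of care is the variance bound, since one must justify the Gaussian-tail envelope for $\cos^{d-1}$ on the full interval $(-\pi/2,\pi/2)$ and combine it with sharp Stirling asymptotics for the normalizing constant; everything else is an explicit change of variables and a symmetry argument. An alternative to the Gaussian envelope, if one prefers closed forms, is to evaluate $\int_{-\pi/2}^{\pi/2} u^2 \cos^{d-1}(u)\,du$ by differentiating the identity $\int_{-\pi/2}^{\pi/2} e^{i\alpha u}\cos^{d-1}(u)\,du = \pi \Gamma(d)/\bigl(2^{d-1}\Gamma((d+\alpha)/2)\Gamma((d-\alpha)/2)\bigr)$ twice in $\alpha$ at $\alpha=0$; this route even produces the exact constant $\mathrm{Var}(\Theta) = \tfrac{1}{4}\bigl(\psi'(d/2) - \psi'((d+1)/2)\bigr) \sim 1/(4d)$, where $\psi'$ is the trigamma function, but the qualitative $O(1/\sqrt{d})$ bound suffices for downstream use.
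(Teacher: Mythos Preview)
Your proposal is correct and follows essentially the same route as the paper: write the joint density, change to polar coordinates to factor out $f_\Theta$, use the reflection symmetry about $\pi/4$ for the mean, and bound the variance by replacing $\cos^{d-1}$ with a Gaussian envelope and combining with Stirling for the normalizing constant. The only cosmetic differences are that the paper extracts the constant via the normal-moment identity (Fact~\ref{moment_normal}) rather than Legendre duplication, and uses the slightly weaker envelope $\cos(2\theta)=1-2\sin^2\theta\le e^{-8\theta^2/\pi^2}$ (via $\sin x\ge 2x/\pi$) in place of your $\cos u\le e^{-u^2/2}$; both yield the same $O(1/d)$ variance bound.
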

\begin{proof}
    Since $X$ and $Y$ are i.i.d., their joint distribution function is the following:
    \begin{align*}
    f_{X,Y}(x, y) = f_Z(x) \cdot f_Z(y) 
    = \left( \frac{2}{2^{d/2} \cdot \Gamma(d/2)} \right)^2 (x \cdot y)^{d-1} \exp\left( -(x^2 + y^2)/2 \right).
    \end{align*}
    Now by changing the coordinates from Cartesian to polar we can represent the above distribution as a joint distribution over variables $r = \sqrt{x^2+y^2}, \theta = \tan^{-1}(y/x)$ with $r \ge 0$ and $\theta \in [0, \pi/2)$ as follows:
    \begin{align*}
    f_{R, \Theta}(r, \theta) = r \cdot f_{X, Y}\left( r\cos \theta, r \sin \theta \right) = \frac{r^{2d-1}}{2^{2d - 3} \cdot \Gamma(d/2)^2} \exp\left( -r^2/2 \right) \cdot \sin^{d-1}(2\theta).
    \end{align*}
    Since the joint probability distribution of $r, \theta$ is a separable function, we can deduce the marginal probability distribution of $\theta$ as follows:
    \begin{align*}
        f_{\Theta}(\theta) &= \int_0^\infty f_{R, \Theta}(r, \theta) dr \\
        &= \frac{ \sin^{d-1}(2\theta) }{2^{2d - 3} \cdot \Gamma(d/2)^2} \cdot \int_0^\infty r^{2d-1} \exp\left( -r^2/2 \right)  dr \\
        &= \frac{ \sqrt{2\pi} \cdot \sin^{d-1}(2\theta) }{2^{2d - 2} \cdot \Gamma(d/2)^2} \cdot \int_{-\infty}^\infty \frac{|r|^{2d-1}}{\sqrt{2\pi}} e^{ -r^2/2 }  dr \\
        &= \frac{ \sqrt{2\pi} \cdot \sin^{d-1}(2\theta) }{2^{2d - 2} \cdot \Gamma(d/2)^2} \cdot \E_{r \sim \Ncal(0,1)} \left[ |r|^{2d-1} \right] \\
        &= \frac{\Gamma(d)}{2^{d - 2} \cdot \Gamma(d/2)^2} \cdot \sin^{d-1}(2\theta),
    \end{align*}
    where the last equality above follows from Fact~\ref{moment_normal}. For a proof of the mean and variance bound see \cref{lem_var_theta}.

    From the symmetry of $f_{\Theta}(\cdot)$ around $\theta = \pi / 4$, it is clear that $\mu_{\Theta} = \pi / 4$. We now bound $\sigma_{\Theta}$.
    \begin{align*}
        \text{Var}(\Theta) &= \frac{\Gamma(d)}{2^{d-2}\Gamma(d/2)^2}\int_{0}^{\pi/2}(\theta - \pi / 4)^2 \sin^{d-1}(2\theta)\, \mathrm{d}\theta\\
        &= \frac{\Gamma(d)}{2^{d-2}\Gamma(d/2)^2} \int_{-\pi/4}^{\pi/4} \theta^2 \sin^{d-1}(2\theta + \pi / 2)\, \mathrm{d}\theta\\
        &= \frac{2\Gamma(d)}{2^{d-2}\Gamma(d/2)^2}\int_{0}^{\pi/4}\theta^2 \cos^{d-1}(2\theta)\, \mathrm{d}\theta\\
        &= \frac{\Gamma(d)}{2^{d-3}\Gamma(d/2)^2}\int_{0}^{\pi/4}\theta^2 (1 - 2\sin^2 \theta)^{d-1} \mathrm{d}\theta\\
        &\le \frac{\Gamma(d)}{2^{d-3}\Gamma(d/2)^2}\int_{0}^{\pi/4}\theta^2 (1 - \frac{8\theta^2}{\pi^2})^{d-1} \mathrm{d}\theta \qquad \text{(since $\sin(x) \ge 2x/\pi$ for $0 \le x \le \pi/2)$}\\
        &\le \frac{\Gamma(d)}{2^{d-3}\Gamma(d/2)^2}\int_{0}^{\pi/4} \theta^2 \exp\left(-\frac{8(d-1)\theta^2}{\pi^2}\right)\, \mathrm{d}\theta \qquad \text{(since $1-x \le \exp(-x)$)}\\
        &\le \frac{\Gamma(d)}{2^{d-3}\Gamma(d/2)^2}\int_{0}^{\infty}\theta^2\exp\left(-\frac{8(d-1)\theta^2}{\pi^2}\right)\, \mathrm{d}\theta \ .
    \end{align*}
    Substituting $\beta = 8(d-1)\theta^2/\pi^2$, we have
    \begin{align*}
        \text{Var}(\Theta) &\le \frac{\Gamma(d)}{2^{d-3}\Gamma(d/2)^2} \cdot \frac{\pi^3}{32\sqrt{2}(d-1)^{3/2}}\int_{0}^{\infty}\beta^{1/2}\exp(-\beta)\, \mathrm{d}\beta \le C \frac{\Gamma(d)}{2^{d-3}\Gamma(d/2)^2(d-1)^{3/2}}.
    \end{align*}
    Assuming $d$ is even and using Stirling approximation, we get
    \begin{align*}
        \Gamma(d) = \frac{d!}{d} \approx \frac{\sqrt{2\pi d} \left({d}/{e}\right)^d}{d}\, \text{and}\, \Gamma(d/2) = \frac{(d/2)!}{(d/2)} \approx \frac{\sqrt{2\pi(d/2)}(d/2e)^{d/2}}{d/2}.
    \end{align*}
    Therefore, 
    \begin{align*}
        \text{Var}(\Theta) \le C' \frac{2^d \sqrt{d}}{2^{d-3}(d-1)^{3/2}} \le \frac{C''}{d-1}
    \end{align*}
    where $C''$ is a universal constant independent of $d$ hence proving the theorem.
\end{proof}
\section{Proof of Theorem~\ref{main_thrm}} \label{sec:proof_main_thrm}
Suppose that $X, Y$ are random variables as in Lemma~\ref{lem_var_theta} so that $X^2, Y^2 \sim \chi^2_d$ and define $R = \sqrt{X^2 + Y^2}$ and $\Theta = \tan^{-1}(Y/X)$. Let $\{\theta_1, \ldots, \theta_k\} \subseteq [0, \pi/2]$ be a codebook and define $\Theta' = \text{round}(\Theta)$ to be the $\theta_i$ nearest to $\Theta$ and $R'$ be an approximation of $R$. We then define
\begin{align*}
    X' = R' \cdot \cos(\text{round}(\Theta))\qquad \text{and}\qquad Y' = R' \cdot \sin(\text{round}(\Theta))
\end{align*}
to be the reconstructions of $X$ and $Y$ respectively from the rounding scheme, which rounds $\Theta$ using the codebook and the radius $R$ using a recursive approximation. The reconstruction error is defined as
\begin{align*}
    & (X - X')^2 + (Y - Y')^2\\
    &= (R \cos(\Theta) - R' \cos(\Theta'))^2 + (R  \sin(\Theta) - R' \sin(\Theta'))^2\\
    &= (R \cos(\Theta) - R \cos(\Theta') + R\cos(\Theta') - R'\cos(\Theta'))^2 + (R\sin(\Theta) - R\sin(\Theta') + R\sin(\Theta') -R'\sin(\Theta'))^2.
\end{align*}
Using the fact that $2ab \le (1/\alpha) \cdot a^2 + \alpha \cdot b^2$ for any $\alpha > 0$, we get $(a+b)^2 = a^2 + 2ab + b^2 \le (1 + 1/\alpha)a^2 + (1+\alpha)b^2$ for any $\alpha > 0$. Therefore we have that for any $\alpha > 0$,    
\begin{align*}
   (X - X')^2 + (Y - Y')^2  &\le (1+1/\alpha)R^2(\cos(\Theta) - \cos(\Theta'))^2 + (1 + \alpha)(R-R')^2\cos(\Theta')^2\\
    &\qquad+ (1+1/\alpha)R^2(\sin(\Theta) - \sin(\Theta'))^2 + (1+\alpha)(R-R')^2\sin(\Theta')^2\\
    &\le (2 + 2/\alpha)R^2(\Theta - \Theta')^2 + (1+\alpha)(R - R')^2,
\end{align*}
where we used that fact that $\sin(\cdot)$ and $\cos(\cdot)$ are 1-Lipschitz and that $\sin^2(\Theta') + \cos^2(\Theta') = 1$.

Now, restricting $\alpha \in (0, 1)$ we get that 
\begin{align*}
    \E[(X-X')^2 + (Y - Y')^2] \le \frac{4\E[R^2]}{\alpha}\E[(\Theta - \Theta')^2] + (1+\alpha)\E[(R-R')^2]
\end{align*}
using the independence of $R$ and $\Theta$ and the fact that $\Theta'$ is a deterministic function of $\Theta$ given the codebook $\{\theta_1, \ldots, \theta_k\}$. 

When $d = 1$, we call $\E[(X-X')^2 + (Y-Y')^2]$ to be $\text{error}_0$ since it is the expected error in the reconstruction at level 0 and similarly, we call $\E[(R-R')^2] = \text{error}_1$ since it is the error in level $1$. We also call $\E[(\Theta - \Theta')^2] = \text{quant}_1$ since it is the error of quantizing angles in that level. We therefore have
\begin{align*}
    \text{error}_0 \le \frac{4\E[R^2]}{\alpha} \cdot \text{quant}_1 + (1+\alpha) \cdot \text{error}_1.
\end{align*}

Given a vector $(X_1, \ldots, X_d)$, where each $X_i \sim N(0, 1)$, let $(X'_1, \ldots, X'_d)$ be the reconstructions using $t = \log_2(d)$-level quantization as described in the introduction. Extending the above definitions, we say that $\text{error}_i$ is the total expected error in the reconstructions of level $i$ coordinates in the quantization scheme. Using the above, inequality, we get
\begin{align*}
    \text{error}_0 \le \frac{4d}{\alpha} \cdot \text{quant}_1 +  \frac{4d}{\alpha}(1+\alpha) \cdot \text{quant}_2 + \frac{4d}{\alpha}(1+\alpha)^3 \cdot \text{quant}_2 + \cdots + \frac{4d}{\alpha}(1+\alpha)^{t} \cdot \text{quant}_t + (1+\alpha)^{t+1} \cdot \text{error}_{t+1},
\end{align*}
where we use the fact that $\E[X_1^2 + \cdots + X_d^2] = d$. Since we store the top-level radius exactly, we have $\text{error}_{t+1} = 0$ and therefore,
\begin{align*}
    \frac{\text{error}_0}{d} \le \frac{4}{\alpha}\left(\text{quant}_1 + (1+\alpha) \cdot \text{quant}_2 + \cdots + (1+\alpha)^t \cdot \text{quant}_t\right).
\end{align*}
For each level $i$, given $\varepsilon > 0$, we will now upper bound the size of the codebook for level $i$ so that $\text{quant}_i \le \varepsilon$. It is clear that a codebook $\{0, \sqrt{\varepsilon}, 2\sqrt{\varepsilon}, \ldots, 2\pi\}$ has a size $\lceil{2\pi/\sqrt{\varepsilon}}\rceil + 1$ and has $\text{quant}_i \le \varepsilon$ irrespective of the distribution of the angles. We would like to use the fact that the distribution of angles gets concentrated with increasing level $i$ to obtain better bounds on the size of the codebook. To that end, we prove the following lemma.
\begin{lemma}
    Let $X \in [0, \pi/2]$ be an arbitrary random variable with $\text{Var}(X) = \sigma^2$. Given $x$ and a set $S = \{x_1, \ldots, x_k\}$, define $d(x, S) = \min_{i \in [k]}|x_i - x|$. Define
    \begin{align*}
        \text{Var}_k(X) := \min_{S : |S| = k} E[d(X, S)^2].
    \end{align*}
    Given $\varepsilon > 0$, for $k = \Omega(\log(1/\sigma)/\sqrt{\varepsilon})$, we have $\text{Var}_k(X) \le \varepsilon \cdot \text{Var}_1(X) = \varepsilon \cdot \sigma^2$.
\end{lemma}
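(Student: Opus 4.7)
The plan is to construct an explicit codebook $S$ of size $k = O(\log(1/\sigma)/\sqrt{\varepsilon})$ and to bound $\E[d(X, S)^2] \le \varepsilon \sigma^2$ via a dyadic decomposition of $[0, \pi/2]$ around the mean $\mu = \E[X]$, with each piece controlled by Chebyshev's inequality.

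First I would set up the partition. Define $A_0 = \{x \in [0,\pi/2] : |x - \mu| \le \sigma\}$ and, for $j \ge 1$, $A_j = \{x \in [0,\pi/2] : 2^{j-1}\sigma < |x - \mu| \le 2^j \sigma\}$. Since $X \in [0, \pi/2]$, only $J = O(\log(1/\sigma))$ annuli are needed to cover the support (we stop once $2^J \sigma \ge \pi/2$). Chebyshev's inequality gives $\Pr[X \in A_j] \le \sigma^2 / (2^{j-1}\sigma)^2 = 4^{1-j}$ for $j \ge 1$, and trivially $\Pr[X \in A_0] \le 1$.

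Next I would place $n_j$ evenly spaced centers inside $A_j$. The worst-case distance from any $x \in A_j$ to its nearest center is then at most $2^{j-1}\sigma / n_j$, so the contribution of $A_j$ to $\E[d(X,S)^2]$ is bounded by $(2^{j-1}\sigma / n_j)^2 \cdot \Pr[X \in A_j] \le \sigma^2 / n_j^2$, a bound uniform in $j$ because the squared radius and the Chebyshev tail exactly cancel. Summing over all annuli, the total error is at most $\sum_{j=0}^J \sigma^2/n_j^2$, and the codebook has size $k = \sum_{j=0}^J n_j$.

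The main obstacle is to choose the allocation $\{n_j\}$ so that $\sum_j 1/n_j^2 \le \varepsilon$ while $\sum_j n_j = O(\log(1/\sigma)/\sqrt{\varepsilon})$. A Lagrangian calculation on this constrained optimization gives the uniform choice $n_j \equiv \sqrt{J/\varepsilon}$, which already yields $k = O((\log(1/\sigma))^{3/2}/\sqrt{\varepsilon})$. Shaving the remaining $\sqrt{\log(1/\sigma)}$ factor is the delicate step, and I would approach it either by (i) tightening the per-annulus error bound by exploiting the variance $\sigma^2$ and the boundedness $X \in [0, \pi/2]$ simultaneously, so that mass near the far boundary of an outer annulus is penalized twice, or (ii) using non-uniform annulus widths so the outermost annuli can each be covered by $O(1)$ centers while their contribution still telescopes through the $4^{1-j}$ tail factor.
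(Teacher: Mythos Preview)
Your dyadic-annulus plus Chebyshev approach genuinely cannot reach the stated bound, and neither of the two fixes you sketch closes the gap. The obstruction is structural: once you bound the contribution of each annulus by $(2^{j-1}\sigma/n_j)^2\cdot\Pr[X\in A_j]$ and then invoke Chebyshev to replace $\Pr[X\in A_j]$ by $4^{1-j}$, you have already thrown away exactly the information you need. Indeed, under your bound the per-annulus contribution is $\sigma^2/n_j^2$ \emph{independently of $j$}, so minimizing $\sum_j n_j$ subject to $\sum_j 1/n_j^2\le\varepsilon$ over $J=\Theta(\log(1/\sigma))$ terms is a symmetric problem whose optimum is the uniform allocation, giving $k=\Theta(J^{3/2}/\sqrt{\varepsilon})$. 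No choice of $n_j$ does better, and changing the annulus widths does not help either: with any geometric ratio the squared-radius and Chebyshev-tail factors still cancel, so the per-annulus bound is still flat in $j$ and you are stuck with (number of annuli)$^{3/2}$.

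The missing idea is to drop Chebyshev entirely and use a \emph{pointwise relative} bound on the quantization error. The paper places, outside $[\mu-\sigma,\mu+\sigma]$, a geometric grid $S_2=\{\mu\pm(1+\varepsilon)^i\sigma\}$; then for any $x$ with $|x-\mu|\in[(1+\varepsilon)^i\sigma,(1+\varepsilon)^{i+1}\sigma]$ one has $d(x,S_2)\le\varepsilon(1+\varepsilon)^i\sigma\le\varepsilon|x-\mu|$. Hence $d(X,S_2)^2\le\varepsilon^2|X-\mu|^2$ pointwise, and taking expectations gives $\E[d(X,S_2)^2]\le\varepsilon^2\sigma^2$ \emph{directly from the definition of variance}, with no annulus-by-annulus tail bound. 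Combined with an evenly spaced grid $S_1$ of $O(1/\varepsilon)$ points on $[\mu-\sigma,\mu+\sigma]$ (where $d(x,S_1)\le\varepsilon\sigma$), one gets $\E[d(X,S_1\cup S_2)^2]\le 2\varepsilon^2\sigma^2$ with $|S_1\cup S_2|=O(\log(1/\sigma)/\varepsilon)$; substituting $\varepsilon\leftarrow\sqrt{\varepsilon/2}$ gives the claim. Your option (i) gestures toward ``using the variance more carefully,'' but the concrete mechanism---bounding $d(X,S)$ by a constant times $|X-\mu|$ rather than bounding $\Pr[A_j]$---is what is absent from the proposal.
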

The lemma shows that as variance decreases, we can get tighter approximation bounds using the same value of $k$. The proof of this lemma is similar to that of Lemma 2 in \citet{bhattacharya2022k}.
\begin{proof}
Let $\mu = \E[X]$. Consider the interval $[\mu - \sigma, \mu + \sigma]$ and consider the points $S_1 = \{\mu, \mu\, \pm\, \varepsilon \sigma, \mu \pm 2\varepsilon\sigma, \ldots, \mu\, \pm\, \sigma\}$. Note that $|S_1| = 3 + 2/\varepsilon$. We have
\begin{align*}
    \E[d(X, S_1)^2\, \mid X \in [\mu - \sigma, \mu + \sigma]] \le \varepsilon^2\sigma^2
\end{align*}
since every point in the interval $[\mu - \sigma, \mu + \sigma]$ has some point in $S_1$ that is at most $\varepsilon\sigma$ away. 

Now define $S_2 = \{\mu\, \pm\, (1+\varepsilon)^{0}\sigma, \mu\, \pm\, (1+ \varepsilon)^1\sigma, \ldots, \}$ where the exponent $i$ extends until we have $\mu + (1+\varepsilon)^i\sigma \ge \pi/2$ and $\mu - (1+\varepsilon)^i \sigma \le 0$. Note that we have $|S_2| \le O(\log(1/\sigma)/\varepsilon)$. Suppose $|X-\mu| \in [(1+\varepsilon)^i\sigma, (1+\varepsilon)^{i+1}\sigma]$, then $d(X, S_2) \le \varepsilon(1+\varepsilon)^i\sigma \le \varepsilon|X-\mu|$. Now,
\begin{align*}
    \E[d(X, S_1 \cup S_2)^2] \le \E[\max(\varepsilon\sigma, \varepsilon|X-\mu|)^2] \le 2\varepsilon^2\sigma^2.
\end{align*}
Thus by putting $\varepsilon \coloneqq \sqrt{\varepsilon/2}$ above, if $k = \Omega(\log (1/\sigma)/\sqrt{\varepsilon})$, then $\text{Var}_k(X) \le \varepsilon\sigma^2$.
\end{proof}
Now we consider bounding $\text{quant}_i$ by picking an appropriate codebook for level $i$. For all $i > 0$, given a codebook $\mathcal{C} = \{\theta_1, \ldots \theta_{|\mathcal{C}|}\}$, we have $\text{quant}_i = \E[(\Theta - \text{round}(\Theta, \mathcal{C}))^2]$ where $\Theta$ is $\arctan(Y/X)$ with $Y^2, X^2 \sim \chi^2_{2^{i-1}}$. From the lemma above, when $i >1$, we have $\text{Var}(\Theta) \le C/(2^{i-1} - 1)$ and hence, the above lemma shows that there exists a codebook $\mathcal{C}$ of size $|\mathcal{C}| = \Theta(i/\sqrt{\varepsilon})$,
\begin{align}
    \text{quant}_i  \le \frac{\varepsilon}{2^{i-1}}.
\end{align}
If we use a codebook of size $O(1/\sqrt{\varepsilon})$ for level 1, we have $\text{quant}_1 \le \varepsilon$ and as described above, for $i \ge 1$, if we use a codebook of size $|\mathcal{C}| = \Theta(i/\sqrt{\varepsilon})$, then $\text{quant}_i \le \varepsilon/(2^{i-1})$ from which we obtain that
\begin{align*}
    \frac{\text{error}_0}{d} \le \frac{4}{\alpha}(2\varepsilon + \frac{1+\alpha}{\varepsilon} (2\varepsilon) + \left(\frac{1+\alpha}{2}\right)^2 (2\varepsilon) + \cdots ) \le O(\varepsilon)
\end{align*}
picking $\alpha = 1/2$.
Now we compute the number of bits necessary to store the quantized representation of a $d$-dimensional vector. In level $i$, we have $d/2^{i}$ independent samples of $\Theta$ to be quantized and therefore, we need to store $O(\frac{d}{2^{i}} \log(i/\sqrt{\varepsilon}))$ bits for indexing into the codebook for level $i$. Adding over all the levels, we store $O(d\log 1/\sqrt{\varepsilon})$ bits for representing the $d$-dimensional vector and therefore $O(\log 1/\varepsilon)$ bits per coordinate to achieve an average reconstruction error of $\varepsilon$.

\end{document}